\newtheorem{lemma}{Lemma}
\newtheorem{proposition}{Proposition}
\theoremstyle{definition}
\newtheorem{definition}{Definition}
\newcommand{\pa}{\partial}
\newcommand{\ba}{\begin{align}}
\newcommand{\ea}{\end{align}}
\newcommand{\fr}{\frac}
\newcommand{\R}{{\mathbb R}}
\newcommand{\N}{{\mathcal N}}
\DeclareMathOperator*{\tr}{tr}
\DeclareMathOperator*{\diag}{diag}
\newcommand{\calF}{{\cal F}}
\newcommand{\calN}{{\cal N}}
\newcommand{\bfa}{\mathbf{a}}
\newcommand{\bfe}{\mathbf{e}}
\newcommand{\bfh}{\mathbf{h}}
\newcommand{\bfm}{\mathbf{m}}
\newcommand{\bfp}{\mathbf{p}}
\newcommand{\bfq}{\mathbf{q}}
\newcommand{\bfu}{\mathbf{u}}
\newcommand{\bfv}{\mathbf{v}}
\newcommand{\bfz}{\mathbf{z}}
\newcommand{\bfeta}{\boldsymbol{\eta}}
\newcommand{\bfmu}{\boldsymbol{\mu}}
\newcommand{\bfnu}{\boldsymbol{\nu}}
\newcommand{\bfomega}{\boldsymbol{\omega}}
\newcommand{\bfxi}{\boldsymbol{\xi}}
\newcommand{\bfU}{\mathbf{U}}
\newcommand{\bbH}{\mathbb{H}}
\newcommand{\bbR}{\mathbb{R}}
\newcommand{\bbS}{\mathbb{S}}
\title{\LARGE \bf Active Exploration and Mapping via Iterative Covariance Regulation over Continuous $SE(3)$ Trajectories}
\author{Shumon Koga \and Arash Asgharivaskasi \and Nikolay Atanasov
\thanks{We gratefully acknowledge support from ARL DCIST CRA W911NF-17-2-0181 and ONR SAI N00014-18-1-2828.}%
\thanks{The authors are with the Department of Electrical and Computer Engineering, UC San Diego, 9500 Gilman Drive, La Jolla, CA, 92093-0411, {\tt\small \{skoga,aasghari,natanasov\}@ucsd.edu}.}
\thanks{Supplementary materials can be found at \url{https://shumon0423.github.io/IROS2021_webpage}.}
}
\begin{document}

\maketitle
\thispagestyle{empty}
\pagestyle{empty}


\begin{abstract}
This paper develops \emph{iterative Covariance Regulation} (iCR), a novel method for active exploration and mapping for a mobile robot equipped with on-board sensors. The problem is posed as optimal control over the $SE(3)$ pose kinematics of the robot to minimize the differential entropy of the map conditioned the potential sensor observations. We introduce a differentiable field of view formulation, and derive iCR via the gradient descent method to iteratively update an open-loop control sequence in continuous space so that the covariance of the map estimate is minimized. We demonstrate autonomous exploration and uncertainty reduction in simulated occupancy grid environments. 
\end{abstract}


\section{Introduction}

Simultaneous Localization and Mapping (SLAM) is a key research direction that has enabled robots to transition from controlled, structured, and fully known environments to operation in a priori unknown real-world conditions \cite{cadena2016past}. Many current SLAM techniques, however, remain \emph{passive} in their utilization of sensor data. Active SLAM \cite{delmerico2017active} is an extension of the SLAM problem which couples perception and control, aiming to acquire more information about the environment and reduce the uncertainty in the localization and mapping process. 
Active SLAM introduces unique challenges related to keeping the map and location estimation process accurate, and yet computing and propagating uncertainty over many potential sensing trajectories efficiently to select an informative one. Most of the literature in active exploration and mapping focuses on discrete \cite{charrow2015information, zhang2020fsmi} or sampling-based \cite{hollinger2014sampling} planning techniques. However, the trajectories and information collection process of the robot sensors evolve continuously over the $SE(3)$ space of sensor poses. As evidenced by successful applications of continuous control to exploration in reinforcement learning \cite{dalal2018safe} and active target tracking \cite{zhou2011multirobot}, developing active SLAM techniques for continuous control is expected to reduce uncertainty more effectively and smoothly compared to discrete control techniques.


This paper develops a new forward-backward gradient computation technique to optimize multi-step control input sequences over the $SE(3)$ pose kinematics, leading to maximum uncertainty reduction. The core problem is formalized as $SE(3)$ trajectory optimization to minimize the differential entropy of the map state conditioned on the sequence of measurements obtained by an on-board sensor (e.g., Lidar or RGB-D camera). Assuming a Gaussian prior over the map state and its Bayesian evolution along a sensing trajectory, the differential entropy of the map is proportional to the log determinant of the covariance matrix at the final time. To ensure that the covariance matrix evolution is differentiable with respect to the control sequence, we introduce a new differentiable field of view formulation for the sensing model, providing a smooth transition from unobserved to observed space in the environment. Finally, the gradient of the objective function with respect to the multi-step control input sequences is computed explicitly and the control trajectory is updated via gradient descent. 

Throughout this paper, we focus on occlusion-free planning by allowing the robot to pass through the occupied space in the environment, which can be implemented for 2-D mapping using an aerial robot and have been studied in literature \cite{indelman2015planning}. While in \cite{indelman2015planning} the gradient descent is applied via perturbation method to obtain an approximated gradient, our approach develops an explicit gradient formulation. We emphasize that our formulation is different from gradient descent methods that optimize the instantaneous sensing cost \cite{grocholsky2002information} because it considers the sensing performance over a long-horizon in an optimal control formulation.


\textbf{Related Work:} One of the earliest approaches for active exploration and mapping \cite{yamauchi1998frontier} is based on detecting and planning a shortest path to map frontiers (boundaries between explored and unexplored space). Frontier-based exploration is an efficient and effective method, prompting its widespread use in robotics \cite{burgard2005coordinated,holz2010evaluating,oleynikova2016continuous}. Its purely geometric nature, however, is a limiting factor for considering sensing noise and map uncertainty reduction from a probabilistic inference perspective \cite{thrun2000probabilistic}. Information-theoretic planning is an alternative approach, which utilizes an information measure, such as mutual information or conditional entropy, to quantify and minimize the uncertainty in the map state. Information-theoretic mapping was first introduced by Elfes \cite{elfes1995robot}, and subsequently has been developed in many studies \cite{moorehead2001autonomous, grocholsky2002information, hollinger2014sampling, sommerlade2008information}, including applications to active SLAM \cite{carlone2014active,atanasov2015decentralized}. Evaluating information measures accompanies a high computational effort in general, which makes online planning challenging. Efficient computation methods have been proposed in \cite{charrow2015information} for Causchy-Schwarz quadratic mutual information (CSQMI), and in \cite{zhang2020fsmi} for fast Shannon mutual information (FSMI). Instead of binary occupancy grid mapping, recent information-based active mapping techniques have considered truncated signed distance field (TSDF) maps \cite{saulnier2020information} and multi-category semantic maps \cite{asgharivaskasi2021active}. Existing methods are, however, limited to discrete control spaces, typically with a finite number of possible control inputs \cite{Kantaros_InformationGathering_RSS19, saulnier2020information, charrow2015information, zhang2020fsmi}, and have not considered optimal control formulations of the active mapping problem.


Optimal control has been intensively studied since the work of Bellman \cite{bellman1952theory}, which developed the well-known dynamic programming algorithm. Applying dynamic programming to continuous control spaces requires finite-dimensional approximations of the value function. Instead of a globally optimal control policy, a locally optimal state-control trajectory may be obtained via Pontryagin's Maximum Principle (PMP) \cite{pontryagin2018mathematical}. An iterative approach to solve PMP was originally developed by Kelley and Bryson \cite{kelley1960gradient,bryson1961gradient}, via solving the adjoint system backward in time and updating the new control sequence, known as the adjoint method. As a second-order convergent algorithm, Differential Dynamic Programming (DDP) has been proposed to solve the value function and the control input via iteration of forward and backward path \cite{jacobson1970differential,liao1991convergence}. 
To mitigate the computational burden in DDP caused by a tensor calculus, an ``iterative LQR" (iLQR) method has been proposed in \cite{li2004iterative}, which successfully eliminates the heavy computation by invoking the linearization only at the backward path.

Among the many optimization methods, gradient descent is perhaps one of the most recognized approaches. The gradient-based methods for optimizing an instantaneous cost have been applied to planning tasks of the mobile robots, such as in \cite{morbidi2012active, wei2014optimized} for active target tracking, in \cite{cristofalo2019vision} for localizing 3-D features in an environment. For enabling the gradient calculus, a differentiable formulation of a field of view has been proposed in \cite{murali2019perception}. However, when applying the gradient descent to the optimal control with dynamical systems, the gradient with respect to multi-step control input sequences is needed, unlike optimizations of a static function or an instantaneous cost. The adjoint method has been widely utilized as an implicit gradient descent method for solving such an optimal control under multi-step control input sequences. However, the dynamical system in the adjoint method is supposed to belong to a vector field, which is not directly applicable to the $SE(3)$ pose kinematics we consider. In addition to it, an optimal control approach for an information-based active exploration and mapping remains an open problem. 

\textbf{Contributions:}
We develop \emph{iterative Covariance Regulation} (iCR), a new forward-backward gradient descent algorithm for finite-horizon optimal control of the covariance matrix of an active estimation process. To ensure that the Riccati equation governing the covariance evolution can be differentiated with respect to a control sequence, we introduce a new \emph{differentiable field of view} formulation of the sensing model.



\section{Problem Statement}

This section formalizes active exploration and mapping as an optimal control problem. 

\subsection{Motion and Sensor Models} 
Consider a robot with pose $T_k \in SE(3)$ at time $ t = t_k \in \R_+$, where $\{t_k\}_{k=0}^{K}$ for some $K \in {\mathbb N}$ is an increasing sequence. The definition of pose and its discrete-time kinematic model are:
  \begin{align} 
  T_k:= \left[ 
\begin{array}{cc} 
R_k & \bfp_k \\
{\mathbf 0}_{3 \times 1}^\top & 1 
\end{array} 
\right],  \quad \label{SE2dyn} 
 T_{k+1} =  T_{k} \exp \left(\tau \hat \bfu_k \right) , 
 \end{align}  
where $\bfp_k \in  \R^3$ is position, $R_k \in SO(3)$ is orientation, and $\bfu_k = [\bfv_k^\top, \bfomega_k^\top]^\top \in \R^6$ is a control input, consisting of linear velocity $\bfv_k \in \R^3$ and angular velocity $\bfomega_k \in \R^3$. The hat operator $ \hat{(\cdot)} : \R^6 \to se(3)$ maps vectors in $\R^6$ to the Lie algebra $se(3)$ associated with the $SE(3)$ Lie group \cite{barfoot2017state}. 
 
The robot evolves in a 3-D environment described by a set $\Omega \subset \R^3$. A map $\bfm \in \R^n$ of the environment is defined by discretizing $\Omega$ into $n \in {\mathbb N}$ tiles, such as voxels or octants \cite{hornung2013octomap}, and associating each tile $j \in {1,\ldots,n}$ with a position $\bfp^{(j)} \in \R^3$ and a mapped value $\bfm^{(j)} \in \R$, such as occupancy or signed distance \cite{oleynikova2017voxblox}.



Let $\bfz_k \in \R^n$ be a measurement of the environment obtained by the robot at time $t_k$, according to the following sensor model: 
\begin{align} \label{eq:sensor} 
    \bfz_k = \bfh(T_k, \bfm) + \bfeta_k, \qquad \bfeta_k \sim \N (0, V(T,\bfm)),
\end{align}                        
where $\bfh: SE(3) \times \R^n \to \R^{n}$ is the measurement function which depends on both the pose and map states, and $\bfeta_k$ is Gaussian sensing noise with zero mean and covariance matrix $V: SE(3) \times \R^n \to \R^{n \times n}$. We present a differentiable formulation of the noise covariance matrix, which allows capturing field of view constraints, in Sec. \ref{sec:planning}.

\subsection{Active Exploration and Mapping} 

Assuming a Gaussian prior on the map state $\bfm$, we construct its posterior conditioned on a sequence of measurements $\bfz_{1:k}$ by means of the Extended Kalman Filter (EKF):
\begin{align}
    \bfm | \bfz_{1:k} \sim \N\left(\bfmu_k, \Sigma_k \right) , 
\end{align}
where the mean $\bfmu_k \in \bbR^n$ and covariance $\Sigma_k \in \bbS_{\succ 0}^{n \times n}$ satisfy:
\begin{equation}\begin{aligned} 
 \bfmu_{k+1} &= \bfmu_k +  \Sigma_{k} H_{k+1}^\top R_{k+1}^{-1} (\bfz_{k+1} - h(T_{k+1}, \bfmu_{k})), \\
    \Sigma_{k+1} &= (\Sigma_k^{-1} + M_{k+1} )^{-1}, \\
    R_{k} &=  H_k \Sigma_{k-1} H_k^\top + V_k, \quad M_k := H_k^\top V_k^{-1} H_k, \\
    H_{k+1} &=  \fr{\pa h(T_{k+1},\bfm)}{\pa \bfm}\bigg |_{ \bfm = \bfmu_k}, V_{k+1} = V(T_{k+1}, \bfmu_k). \label{eq:Hk-def}
\end{aligned}\end{equation}
The Extended Information Filter (EIF) \cite{thrun2004simultaneous}, is an equivalent Bayesian filtering approach to the EKF, which parameterizes the Gaussian distributions in terms of an information matrix $Y_k \in \bbS_{\succ 0}^{n \times n}$ and information mean $\bfxi_k \in \R^n$ as follows:
\begin{align}
\bfm | \bfz_{1:k} \sim & \N\left(Y_k^{-1}\bfxi_k, Y_k^{-1} \right).
\end{align}
The update equations for $\bfxi_k$ and $Y_k$ are given by:
\begin{equation}\begin{aligned} 
\bfxi_{k+1} & = \bfxi_{k} + H_{k+1}^\top R_{k+1}^{-1} (\bfnu_{k+1} + H_{k+1}Y_k^{-1} \bfxi_k ), \\
\bfnu_{k+1} & = \bfz_{k+1} - h(T_{k+1},Y_k^{-1} \bfxi_{k}), \\
    Y_{k+1} & = Y_k + M_{k+1}. \label{eq:EIF-Skupdate} 
\end{aligned}\end{equation}


We formulate active exploration and mapping as a motion planning problem aiming to minimize the differential entropy\footnote{The differential entropy of a continuous random variable $Y$ with probability density function $p$ is defined as $\bbH(Y) := - \int p(y) \log p(y) dy$.} in the map state $\bfm$: 
\begin{align} \label{eq:Entropy} 
\min_{\bfu_0, \dots, \bfu_{K-1}} \bbH( \bfm | \bfz_{1:K}, T_{1:K}), 
\end{align} 
subject to the motion model in \eqref{SE2dyn} and the EIF update in \eqref{eq:EIF-Skupdate}. 

While, in general, the active mapping problem above is a stochastic optimal control problem, owing to the Gaussian distribution of $\bfm | \bfz_{1:k}$, it can be reduced to a deterministic optimal control problem, in which open-loop control policies are optimal \cite{Atanasov14ICRA}. In particular, the differential entropy in \eqref{eq:Entropy} is proportional to $\log \det (Y_K^{-1})$ for a Gaussian distribution, and thus the problem can be reformulated as maximizing the following terminal reward function:
\begin{align} \label{reward-def}
    r^{\bfU} =  \log \det \left(  Y_{K}^{\bfU} \right), 
\end{align}
where $\bfU = [\bfu_0^\top, \bfu_1^\top, \dots, \bfu_{K-1}^\top]^\top \in \R^{6 K}$ is a control sequence. With the help of the EIF, computing the information matrix in \eqref{eq:EIF-Skupdate} recursively, the terminal reward function can be expressed with respect to the trajectory of the pose state $T_{1:K}$. However, since the equations in \eqref{eq:Hk-def} are evaluated at each updated mean both in the EKF and the EIF, which depend on the stochastic measurements, offline computation of the motion planning is not possible. Instead, we evaluate them at an initial estimate of the map $\bfmu_0 \in \R^n$, by which $H$ and $V$ can be fixed offline. The problem is stated as follows.

\textbf{Problem} \label{prob:AM}
Given a prior Gaussian distribution over the map state $\bfm \sim \N(\bfmu_0, \Sigma_0)$ with a mean $\bfmu_0 \in \R^n$ and the covariance matrix $\Sigma_0 \in \bbS^{n\times n}_{\succ 0}$, obtain an open-loop control sequence $\bfU = [ \bfu_0^\top, \dots, \bfu_{K-1}^\top ]^\top \in \R^{6K}$ to solve the following deterministic optimal control problem: 
\begin{align} 
\max_{\bfU} \hspace{1mm} \log \det (Y_K^{\bfU}), \label{prob:reward}
\end{align} 
subject to 
\begin{align} \label{prob:model} 
T_{k+1} &= T_{k} \exp \left(\tau \hat \bfu_k \right) , \quad k = 0, \dots, K-1,
\\ Y_K &=  \Sigma_0^{-1} + \sum_{k=1}^{K} M(T_k), \label{prob:SK-sol} 
\\ M(T) &= H(T)^\top V(T, \bfmu_0)^{-1} H(T),  \label{prob:M-def}
\\ H(T) &=  \fr{\pa h(T, \bfm)}{\pa \bfm} \bigg|_{\bfm = \bfmu_0} . \label{prob:H-def} 
\end{align}

\section{Planning Method} \label{sec:planning} 

This section develops ``iterative Covariance Regulation (iCR)", a new $SE(3)$ trajectory optimization method to solve the active mapping optimal control problem. 



\subsection{Differentiable Field of View}

For a given robot pose state $T$ and the position $\bfp^{(j)}$ of the $j$-th map cell, we consider the robot body-frame coordinates of $\bfp^{(j)}$:
\begin{align}
    \bfq(T, \bfp^{(j)}) = Q T^{-1} \underline{\bfp}^{(j)} , \label{eq:qj-def}
\end{align}
where the projection matrix $Q$ and the homogeneous coordinates $\underline{\bfp}^{(j)}$ are defined as:
\begin{align} 
  Q = \left[ 
  \begin{array}{cc} 
  I_3 & {\mathbf 0}_{3 \times 1} 
  \end{array} 
  \right] \in \R^{3 \times 4}, \;\;
  \underline{\bfp}^{(j)}  = \left[ 
  \begin{array}{c} 
  \bfp^{(j)} \\ 1 
  \end{array} 
  \right]  \in \R^4.  
\end{align}
Let the field of view of the robot be described as a fixed region ${\mathcal F} \subset \R^3$ in the body frame. We formulate the sensor noise matrix $V$ so that the magnitude of the noise in the unobserved domain $\Omega / {\mathcal F}$ is approximately infinite. Hence, we formulate the measurement matrices as 
 \begin{align} 
     V(T, \bfm) &= \textrm{diag} (\{V_{jj}(T, \bfm)\}_{j=1}^{n}) \in \R^{n \times n},  \label{eq:VT-def}
 \end{align}
 where 
 an approximate expression of $V_{jj}(T, \bfm)$ for $j \in 1, \dots, n$ is 
 \begin{align}
     V_{jj}(T, \bfm) \approx \begin{cases} 
     \sigma^2 , \quad \textrm{if} \quad \bfq(T, \bfp^{(j)}) \in {\mathcal F}, \\
     \infty , \quad \textrm{if} \quad \bfq(T, \bfp^{(j)}) \notin {\mathcal F},  
     \end{cases} \label{eq:Viidef}
 \end{align}
 where $\sigma \in \R_+$ is a standard deviation of the sensor noise in the observed domain. For the sake of enabling the gradient descent in the next section, we need the approximate expression in \eqref{eq:Viidef} by a continuous and differential function with respect to $T$. For that reason, we introduce a Signed Distance Function (SDF) defined below for further analysis.

\begin{definition}
The \emph{signed distance function} $d : \R^3 \to \R $ associated with a set $\calF \subset \bbR^3$ is:
\begin{align}
     d(\bfq, {\mathcal F})  = 
     \begin{cases}
     - \min_{\bfq^* \in \pa {\mathcal F}}  || \bfq - \bfq^*||, \quad \textrm{if} \quad \bfq \in {\mathcal F}, \\
     \phantom{-} \min_{\bfq^* \in \pa {\mathcal F}} ||\bfq - \bfq^*||, \quad \textrm{if} \quad \bfq \notin {\mathcal F},  
     \end{cases}
\end{align}
where $\pa {\mathcal F}$ is the boundary of ${\mathcal F}$.
\end{definition}

The conditions $\bfq(T, \bfp^{(j)}) \in {\mathcal F}$ and $\bfq(T, \bfp^{(j)}) \notin {\mathcal F}$ in \eqref{eq:Viidef} are then replaced by inequality conditions in terms of the SDF of $\calF$. Incorporating this idea, the inverse of \eqref{eq:Viidef} is described as:
\begin{align}
     V_{jj}^{-1}(T, \bfm) \approx \fr{1}{\sigma^2} \begin{cases} 
      1, \quad \textrm{if} \quad d(\bfq(T, \bfp^{(j)}), {\mathcal F}) \leq 0, \\
     0 , \quad \textrm{if} \quad d(\bfq(T, \bfp^{(j)}), {\mathcal F})  > 0.  \label{V-inv-1st} 
     \end{cases} 
\end{align}
%
In order to approximate the right-hand side of \eqref{V-inv-1st} by a continuously differentiable function, we rely on a probit function \cite{bishop2006pattern}, defined by the Gaussian CDF $\Phi : \R \to [0, \;1]$:
\begin{align}
     \Phi(x) =& \frac{1}{2} \left[ 1 + \textrm{erf} \left(\frac{x}{\sqrt{2} \kappa} - 2  \right) \right] , \label{Phi-def} 
\end{align}
where $\textrm{erf}(y) := \fr{2}{\sqrt{\pi}} \int_0^{y} e^{-t^2} dt $. Then, \eqref{Phi-def} satisfies $\Phi(x) \approx 0 $ for all $x < 0$ (indeed, $\Phi(0) = 0.002...$ ), for all tuning parameter $\kappa >0$ which gives the smoothness of the function. Namely, we have $\lim_{\kappa \to +0} \Phi(x) = He(x)$, where $He(x)$ is the Heaviside function, also known as the unit step function. Using the function \eqref{Phi-def}, we formulate \eqref{V-inv-1st} as:
\begin{align}
     V_{jj}^{-1}(T, \bfm)  = \fr{1}{\sigma^2} \left(  1 - \Phi( d(\bfq(T, \bfp^{(j)}), {\mathcal F}))  \right)  ,  
          \label{V-inv-2} 
\end{align}
by which a differentiable field of view is obtained and we can apply the gradient descent method next. The visualization of the differentiable field of view is given in Fig.~\ref{fig:DFOV} which depicts 2-D plot of \eqref{V-inv-2} with respect to $\bfp^{(j)}$. Here, the field of view is set as a cone projected onto 2-D space, SDF of which is derived in Appendix.  
 
 \begin{figure}[t]
 \vspace{1.5mm}
 \centering
  \includegraphics[width=0.80\linewidth]{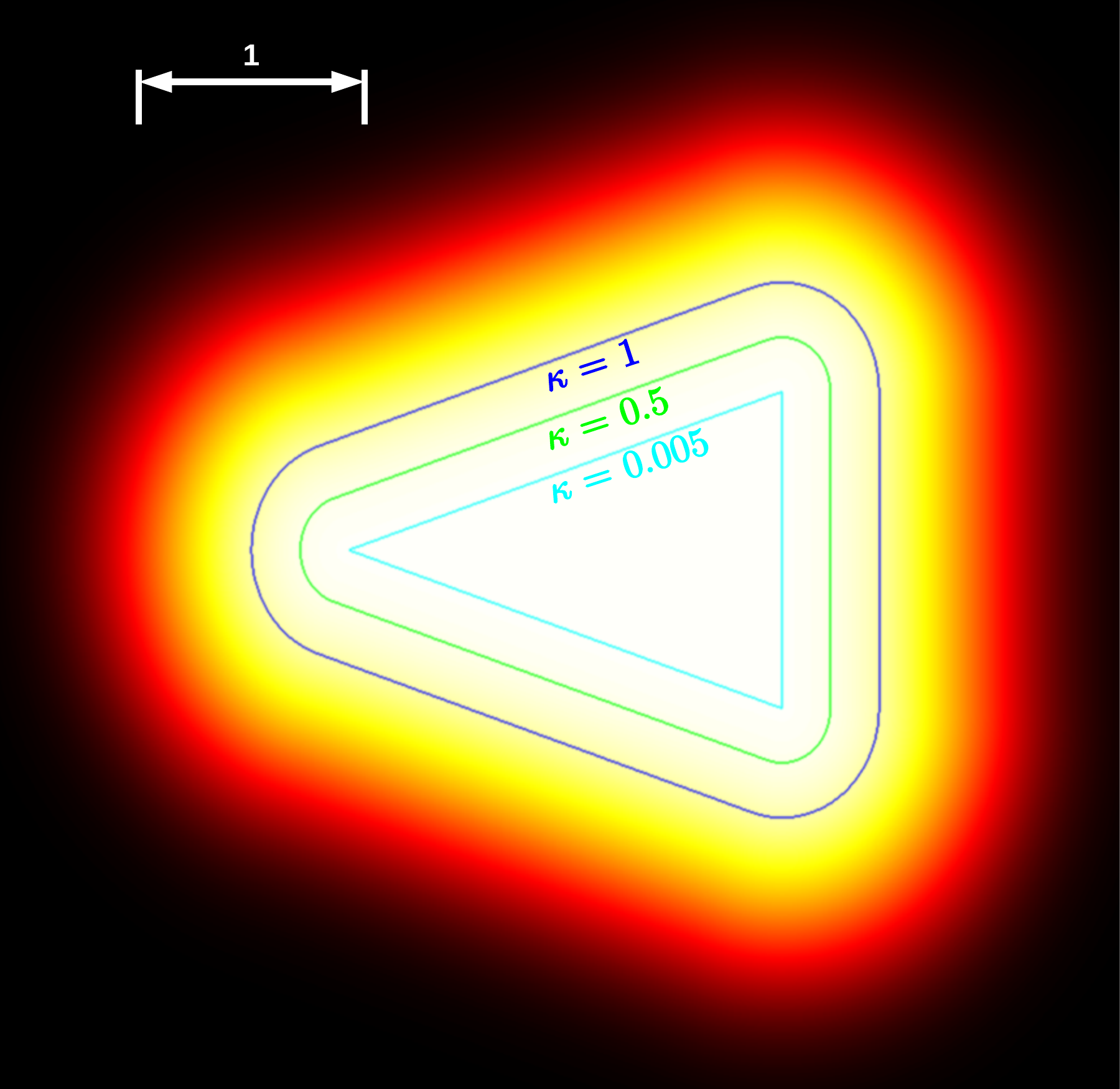}%
  \hfill%
  \caption{2-D plot of a differentiable field of view \eqref{V-inv-2} specified by the signed distance function of a 2-D cone with parameters $\sigma =1$ and $\kappa = 0.5$. The closed lines depict the level sets of $V^{-1}_{jj} = 0.98$ for different choices of $\kappa$. As $\kappa$ gets larger, the area of each level set becomes wider and $V^{-1}_{jj}$ varies smoothly.}
  \label{fig:DFOV}
\end{figure}

\subsection{Iterative  Covariance  Regulation}

\RestyleAlgo{boxruled}
\begin{algorithm}
	\caption{iterative Covariance Regulation (iCR)}
	\label{alg:icr}
	\SetKwBlock{Repeat}{repeat}{}
	\KwData{Initial robot pose $T_0 \in SE(3)$, initial map covariance $\Sigma_0 \in \bbS_{\succ 0}^{n \times n}$, map cell positions $\bfp^{(j)}$ for $j \in \{1,\ldots,n\}$, and initial control sequence $\bfu_{0:K-1}$} 
	Initialize $M \gets 0_{n \times n}$, $\fr{\pa M}{\pa u} \gets 0_{n \times n}$. \\
	\Repeat{
	Set $Y \gets \Sigma_0^{-1}$. \\
	\For{$k\gets 0$ \KwTo $K - 1$}{
    $T_{k+1} \gets T_k \exp \left( \tau  \hat \bfu_k \right)$. \\
    \For{$j\gets 1$ \KwTo $n$}{
    $M_{jj} \gets H(T)^\top V^{-1}(T) H(T)$ by RHS of \eqref{V-inv-2} using $[\bfp^{(j)}, T_{k+1}]$. 
    }
    Update $Y \gets Y + M$. 
    }
	\For{$k\gets K - 1$ \KwTo $0$}{
	Initialize $\fr{\pa r}{\pa u^{(i)}} \gets 0$ for all $i \in \{1, \dots, 6\}$. \\
	\For{$s\gets k + 1$ \KwTo $K$}{
	\For{$i\gets 1$ \KwTo $6$}{
		\uIf{$s = k+1$}{
          Set $\Lambda \gets T_k \fr{\pa \exp( \tau \hat \bfu_k)}{\pa u_k^{(i)}} $ by $[T_k, \bfu_k]$.
        }
        \Else{
            Update $ \Lambda  \gets  \Lambda \exp(\tau \hat \bfu_{s-1}) $. 
        }
	\For{$j\gets 1$ \KwTo $n$}{
	$[\bfq, \fr{\pa \bfq}{\pa u}] \gets $\eqref{eq:qj-def}, \eqref{eq:grad_q} by $[\bfp^{(j)}, T_k, \Lambda]$\!\!  \\
	$\fr{\pa M}{\pa u}_{jj} \gets $ RHS of \eqref{eq:r-def} by $[\bfq, \fr{\pa \bfq}{\pa u}]$. 
	}
	$\fr{\pa r}{\pa u^{(i)}} \gets \fr{\pa r}{\pa u^{(i)}} + \textrm{tr} \left(Y^{-1}  \fr{\pa M}{\pa u} \right) $. 
	}
    }
    $u_k^{(i)} \gets u_k^{(i)} + \alpha^{(i)} \fr{\pa r}{\pa u^{(i)}}$ for all $i \in \{1, \dots, 6\}$. 
    }
    }
\end{algorithm}

Let $u_k^{(i)} \in \R$ be the $i$-th element of the control vector $\bfu_k \in \R^6$ at time $t = t_k$. Our proposed method for active exploration and mapping is presented in Alg.~\ref{alg:icr}. Its derivation is based on gradient descent in the space of control sequences $\bfU$ with respect to the terminal reward $r^\bfU$.


\begin{proposition} \label{prop:gd}
The gradient-descent update for solving active exploration \eqref{prob:reward}--\eqref{prob:H-def} with differentiable field of view \eqref{eq:VT-def}, \eqref{V-inv-2} is given by
\begin{align}
\bfU \leftarrow \bfU + \Gamma \fr{\pa r^{\bfU}}{\pa \bfU}, 
\end{align}
where $\Gamma \in \R^{6K \times 6K }$ is a step size matrix given as $\Gamma := \diag(\{\{\gamma_k^{(i)}\}_{i=1}^{6}\}_{k=0}^{K-1})$ with a positive constant $\gamma_k^{(i)} \in \R_+$, and each element in the gradient $\fr{\pa r^{\bfU}}{\pa \bfU} := [\{\{\fr{\pa r^{\bfU}}{\pa u_{k}^{(i)}}\}_{i=1}^{6}\}_{k=0}^{K-1} ]$ is
\begin{align} 
\fr{\pa r^{\bfU}}{\pa u_{k}^{(i)}} 
 &= \sum_{s=k+1}^{K} \tr \left( (Y_K^{\bfU})^{-1}   \fr{\pa M(T_s)}{\pa u_{k}^{(i)}}\right) ,  \label{log-grad} \\
\fr{\pa M(T_s)}{\pa u_{k}^{(i)}} &=  \diag \left( \left\{w \left( \bfq(T_s, \bfp^{(j)}), \fr{\pa \bfq(T_s, \bfp^{(j)})}{\pa u_k^{(i)}} \right) \right\}_{j=1}^{n} \right), \notag
\end{align}
\begin{align}
     w \left (\bfq, \fr{\pa \bfq}{\pa u} \right ) &=  -  \fr{1}{\sigma^2} \Phi'(d(\bfq,{\mathcal F})) \fr{\pa d(\bfq, {\mathcal F}) }{\pa \bfq} \fr{\pa \bfq }{\pa u},  \label{eq:r-def} \\
      \fr{\pa \bfq(T_s, \bfp^{(j)})}{\pa u_k^{(i)}} 
      &= - Q T_s^{-1} \Lambda^{(i)}_s T_{s}^{-1} \underline{\bfp}^{(j)} \label{eq:grad_q},  
  \end{align}
  and $\Lambda^{(i)}_s := \fr{\pa T_s}{\pa u_k^{(i)}}$ is obtained for $s \in \{k+1, \dots, K\}$ via:
  \begin{equation} \label{eq:Lambda-ini} 
      \Lambda^{(i)}_{s} \!= \begin{cases}
        T_k \fr{\pa \exp \left( \tau \hat \bfu_k \right) }{\pa u_k^{(i)}}, & \text{if } s = k+1\\
        \Lambda^{(i)}_{s-1} \exp\left(\tau \hat \bfu_{s-1} \right), & \text{if } s \in \{k+2, \dots, K\}.
      \end{cases}
  \end{equation}
  %
\end{proposition}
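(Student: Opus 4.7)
The plan is to derive $\fr{\pa r^{\bfU}}{\pa u_k^{(i)}}$ by standard matrix calculus together with the causality inherent in the pose kinematics \eqref{prob:model}. First I would apply the identity $\fr{\pa}{\pa x}\log\det Y = \tr\bigl(Y^{-1}\fr{\pa Y}{\pa x}\bigr)$ to the reward \eqref{reward-def}, so that $\fr{\pa r^{\bfU}}{\pa u_k^{(i)}} = \tr\bigl(Y_K^{-1}\fr{\pa Y_K}{\pa u_k^{(i)}}\bigr)$. From the additive form \eqref{prob:SK-sol} the only terms in $Y_K$ that depend on $u_k^{(i)}$ are those $M(T_s)$ with $s \geq k+1$, since $\bfu_k$ does not influence the earlier poses $T_0,\dots,T_k$. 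This immediately produces the summation from $k+1$ to $K$ in \eqref{log-grad}, and the gradient ascent update $\bfU \leftarrow \bfU + \Gamma \fr{\pa r^{\bfU}}{\pa \bfU}$ is simply the coordinate-wise rescaled version.

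Next, I would differentiate $M(T_s)$. Because $V(T,\bfm)$ is diagonal in \eqref{eq:VT-def} and each $V_{jj}^{-1}$ depends on $T$ only through the body-frame coordinate $\bfq(T,\bfp^{(j)})$ from \eqref{eq:qj-def}, the relevant piece of $M(T_s)$ inherits a diagonal structure whose $j$-th entry depends on $T_s$ solely through $\bfq$. A single chain-rule sweep through the probit $\Phi$, the signed distance $d(\cdot,\calF)$, and the map $\bfq$ yields the scalar $w\bigl(\bfq,\fr{\pa \bfq}{\pa u}\bigr)$ defined in \eqref{eq:r-def}, with the minus sign arising from $1-\Phi$. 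To close the loop, I would evaluate $\fr{\pa \bfq(T_s,\bfp^{(j)})}{\pa u_k^{(i)}}$ from $\bfq = Q T_s^{-1}\underline{\bfp}^{(j)}$ by invoking the matrix identity $\fr{\pa T^{-1}}{\pa u} = -T^{-1}\bigl(\fr{\pa T}{\pa u}\bigr)T^{-1}$ with $\fr{\pa T_s}{\pa u_k^{(i)}} = \Lambda_s^{(i)}$, directly giving \eqref{eq:grad_q}.

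The recursion \eqref{eq:Lambda-ini} for $\Lambda_s^{(i)}$ would then follow by unrolling the kinematics as $T_s = T_k \prod_{\ell=k}^{s-1}\exp(\tau \hat\bfu_\ell)$. Since $\bfu_k$ appears only in the $\ell=k$ factor while the remaining factors and $T_k$ itself are independent of $\bfu_k$, differentiation produces the base case $\Lambda_{k+1}^{(i)} = T_k \fr{\pa \exp(\tau \hat\bfu_k)}{\pa u_k^{(i)}}$, and right-multiplication by $\exp(\tau \hat\bfu_{s-1})$ yields the induction step. The main obstacle I anticipate is handling the $SE(3)$ ambient-space calculus carefully: although $T_s$ lies on a Lie group, the derivative is taken in the ambient $\R^{4\times 4}$, so $\Lambda_s^{(i)}$ is not itself a group element, and one must verify that the inverse-derivative identity still applies in this ambient sense and that the $\exp$ map is differentiated as a matrix function rather than via a right-Jacobian on the Lie algebra. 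A secondary subtlety is the implicit assumption that any direct dependence of $H(T)$ on $T$ drops out of $\fr{\pa M}{\pa u}$ — consistent with Alg.~\ref{alg:icr} computing $M_{jj}$ directly from \eqref{V-inv-2} — and this hypothesis should be stated explicitly when applying the product rule to $M(T) = H(T)^\top V^{-1}(T,\bfmu_0) H(T)$.
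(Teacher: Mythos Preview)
Your proposal is correct and follows essentially the same route as the paper: apply the $\log\det$ identity, use the additive structure of $Y_K$ together with causality of the kinematics to restrict the sum to $s\geq k+1$, then chain-rule through the diagonal $V^{-1}$, the probit, the SDF, and finally $\bfq = QT_s^{-1}\underline{\bfp}^{(j)}$ using the ambient inverse-derivative formula and the unrolled product for $T_s$. The two subtleties you flag (ambient $\R^{4\times 4}$ differentiation of the $SE(3)$ exponential, and the tacit assumption that $H(T)$ contributes no $T$-dependence to $\fr{\pa M}{\pa u}$) are genuine and are simply left implicit in the paper's argument, so your proposal is, if anything, slightly more careful.
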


\begin{proof}
Taking the gradient of the reward in \eqref{reward-def} with the help of \eqref{prob:SK-sol}, one can obtain
\begin{align} 
  \fr{\pa r^{\bfU}}{\pa u_{k}^{(i)}} 
  &=  \textrm{tr} \left( (Y_K^{\bfU})^{-1} \fr{\pa Y_K^{\bfU}}{\pa u_{k}^{(i)}} \right) \notag\\
  &= \textrm{tr} \left( (Y_K^{\bfU})^{-1} \left(  \sum_{s=1}^{K} \fr{\pa M(T_s)}{\pa u_{k}^{(i)}} \right) \right)
  \notag\\
 &= \sum_{s=k+1}^{K}\textrm{tr} \left( (Y_K^{\bfU})^{-1} \left(   \fr{\pa M(T_s)}{\pa u_{k}^{(i)}} \right) \right). 
\end{align}
Substituting \eqref{prob:M-def}, \eqref{eq:VT-def}, \eqref{V-inv-2} and using the chain rule, we get \eqref{log-grad}--\eqref{eq:r-def}. 

In \eqref{eq:r-def}, $\Phi'(d(\bfq,{\mathcal F}))$ and $ \fr{\pa d(\bfq, {\mathcal F}) }{\pa \bfq}$ can be obtained by the derivative of \eqref{Phi-def} and the gradient of the SDF for a given field of view, and thus it remains to compute the gradient $\fr{\pa \bfq }{\pa u_k^{(i)}}$. Taking the gradient of \eqref{eq:qj-def} with respect to $u_{k}^{(i)}$ yields \eqref{eq:grad_q}, by defining $\Lambda^{(i)}_s := \fr{\pa T_s}{\pa u_k^{(i)}}$. The initial condition and the update law for $\Lambda^{(i)}_s$ can be derived by taking the gradient of the pose kinematics in \eqref{SE2dyn}, which leads to \eqref{eq:Lambda-ini}, and hence we can deduce Proposition \ref{prop:gd}.
\end{proof} 

For the computation of $\fr{\pa \exp \left( \tau \hat \bfu_k \right) }{\pa u_k^{(i)}}$ in \eqref{eq:Lambda-ini}, we use the following lemma (see \cite{barfoot2017state} for a proof). 

\begin{lemma}
For $\bfu_k = [u_k^{(1)},\ldots,u_k^{(6)}]^\top \in \bbR^6$, it holds that
\begin{align}
 \frac{\pa \exp(\tau \hat{\bfu}_k)}{\pa  u_k^{(i)}}
 &= \tau ({\mathcal J}_L(\tau \bfu_k)\bfe_i)^{\wedge} \exp(\tau \hat{\bfu}_k), 
\end{align}
where $\bfe_i \in \R^6$ is the $i$-th unit vector, and ${\mathcal J}_L(\cdot)$ is the left Jacobian of $SE(3)$. 
\end{lemma}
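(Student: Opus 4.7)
The plan is to derive the identity from the standard Baker--Campbell--Hausdorff (BCH) based characterization of the left Jacobian of $SE(3)$. Recall that $\mathcal{J}_L$ is defined precisely so that, for any $\bfxi \in \R^6$ and sufficiently small perturbation $\delta\bfxi \in \R^6$,
\begin{align}
\exp\bigl( (\bfxi + \delta\bfxi)^{\wedge} \bigr) = \exp\bigl( (\mathcal{J}_L(\bfxi)\,\delta\bfxi)^{\wedge} \bigr)\, \exp(\bfxi^{\wedge}) + O(\|\delta\bfxi\|^2).
\end{align}
Taking this identity as the starting point, the strategy is to set $\bfxi = \tau \bfu_k$ and choose the perturbation $\delta\bfxi = \tau \delta\, \bfe_i$, corresponding to a scalar perturbation of the $i$-th component $u_k^{(i)}$ by $\delta \in \R$.

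With this substitution, I would write
\begin{align}
\exp\bigl( \tau (\bfu_k + \delta \bfe_i)^{\wedge} \bigr) = \exp\bigl( (\tau \mathcal{J}_L(\tau \bfu_k) \bfe_i\,\delta )^{\wedge} \bigr) \exp(\tau \hat{\bfu}_k) + O(\delta^2),
\end{align}
then expand the first exponential on the right to first order in $\delta$ as $I + \delta\, \tau ( \mathcal{J}_L(\tau \bfu_k) \bfe_i)^{\wedge} + O(\delta^2)$. Subtracting $\exp(\tau \hat{\bfu}_k)$, dividing by $\delta$, and passing to the limit $\delta \to 0$ yields exactly the claimed formula.

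The two ingredients to be imported are (i) the defining BCH-type identity for $\mathcal{J}_L$ above, and (ii) the first-order expansion $\exp(\delta A) = I + \delta A + O(\delta^2)$ for $A \in se(3)$. Step (i) is the nontrivial piece: it is the content of the left-Jacobian chapter of \cite{barfoot2017state}, where $\mathcal{J}_L$ is constructed explicitly via the series $\mathcal{J}_L(\bfxi) = \sum_{n=0}^{\infty} \frac{1}{(n+1)!}\mathrm{ad}_{\bfxi}^n$ (with $\mathrm{ad}_{\bfxi}$ denoting the $SE(3)$ adjoint matrix acting on $\R^6$). If one preferred a fully self-contained derivation, the alternative route is to use the integral representation
\begin{align}
\frac{\pa}{\pa u_k^{(i)}} \exp(\tau \hat{\bfu}_k) = \int_0^1 \exp(s \tau \hat{\bfu}_k)\,\tau \hat{\bfe}_i\, \exp\bigl((1-s)\tau \hat{\bfu}_k\bigr)\, ds,
\end{align}
factor out $\exp(\tau \hat{\bfu}_k)$ on the right, and recognize the remaining integral $\int_0^1 \mathrm{Ad}_{\exp(s\tau \hat{\bfu}_k)}\, ds = \int_0^1 \exp(s\,\mathrm{ad}_{\tau\bfu_k})\,ds \cdot \bfe_i$ acting componentwise, which sums to $\mathcal{J}_L(\tau \bfu_k)\bfe_i$ followed by the $\wedge$ operation.

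The main obstacle is purely bookkeeping: verifying that the series expansion of the integral $\int_0^1 \exp(s\,\mathrm{ad}_{\tau\bfu_k})\, ds$ truly reproduces the $SE(3)$ left Jacobian, rather than a different Jacobian variant (right vs.\ left, or the one coming from a right-trivialization of the tangent bundle). Since $\exp(\tau \hat{\bfu}_k)$ is placed on the \emph{right} of $(\mathcal{J}_L(\tau \bfu_k) \bfe_i)^{\wedge}$ in the claimed identity, the convention being used is the left Jacobian, consistent with the $SE(3)$ pose-kinematics convention $T_{k+1} = T_k \exp(\tau \hat{\bfu}_k)$ adopted throughout the paper. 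Once the conventions match, the remainder of the argument is a direct calculation.
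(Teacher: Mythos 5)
Your derivation is correct, but note that the paper itself gives no proof of this lemma at all --- it simply points to \cite{barfoot2017state} --- so there is no in-paper argument to compare against; what you have written is essentially a reconstruction of the standard argument that the citation is standing in for. Of your two routes, the second (the integral representation $\frac{\pa}{\pa u_k^{(i)}}\exp(\tau\hat\bfu_k)=\int_0^1 \exp(s\tau\hat\bfu_k)\,\tau\hat\bfe_i\,\exp((1-s)\tau\hat\bfu_k)\,ds$, followed by factoring $\exp(\tau\hat\bfu_k)$ on the right and summing $\int_0^1\exp(s\,\mathrm{ad}_{\tau\bfu_k})\,ds=\sum_{n\ge 0}\frac{1}{(n+1)!}\mathrm{ad}_{\tau\bfu_k}^n=\calJ_L(\tau\bfu_k)$) is the genuinely self-contained one, since it needs only the classical derivative-of-the-matrix-exponential formula and the series characterization of the left Jacobian. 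The first route is logically fine but imports almost the entire content of the lemma: the BCH-type identity $\exp((\bfxi+\delta\bfxi)^\wedge)\approx\exp((\calJ_L(\bfxi)\delta\bfxi)^\wedge)\exp(\bfxi^\wedge)$ is just the finite-perturbation restatement of the derivative formula being proved, so taking it as the starting point and differentiating is a one-line equivalence rather than an independent proof. Your convention check is the right one to make: because the paper's kinematics place the increment on the right, $T_{k+1}=T_k\exp(\tau\hat\bfu_k)$, and the claimed formula puts $\exp(\tau\hat\bfu_k)$ to the right of the generator $(\calJ_L(\tau\bfu_k)\bfe_i)^\wedge$, the Jacobian that appears is indeed the left one, and the series $\sum_{n\ge 0}\frac{1}{(n+1)!}\mathrm{ad}^n$ is exactly how the $SE(3)$ left Jacobian (with the $6\times 6$ ``curly'' adjoint of the Lie algebra element) is built in the cited reference, so no variant mismatch arises.
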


\section{Evaluation}

We examine the efficacy of the proposed method in various simulated scenarios using 2-D occupancy maps of real-world environments. The robot moves according to the motion model in \eqref{SE2dyn} as $SE(2)$ dynamics, and its on-board sensor measures the occupancy of neighboring map cells inside its field of view. The field of view ${\mathcal F}$ is set as an isosceles triangle with height $3$ [m] the angle between the two legs equal to $60^{\circ}$ (see Fig.~\ref{fig:DFOV} for visualization and Appendix for derivation). The mapping process is performed as follows. First, we define the initial distribution of $\bfm$ as a multivariate Gaussian $\calN(\mu_0, \Sigma_0)$, where $\mu_0 = \boldsymbol{0} \in \mathbb{R}^{n}$ and $\Sigma_0 = \diag(100) \in \mathbb{R}^{n \times n}$. We encode measurements as $z^{(j)} = -1$ for free cells and $z^{(j)} = 1$ for occupied cells, both of which are for all $j$-th cells inside the field of view. Next, we construct a measurement vector $\bfz_k \in \R^n$, the $j$-th element of which has the measured occupancy value $z^{(j)}$ for all $j$-th cells inside the field of view. All other elements have a prior mean $\bfmu_{k-1}^{(j)}$ for all the $j$-th cells outside the field of view. The constructed measurement vector $\bfz_k$ is passed to EKF to get the updated mean $\bfmu_k$ and the information matrix $Y_k$. In order to visualize the occupancy map, we apply a threshold function $g: \R \to \{-1, 1\}$ which returns $g(x) = 1$ for all $x >0$ and $g(x) = -1$ for all $x <0$, to each element in the updated mean $\bfmu_k$. We set the smoothing factor $\kappa$ in \eqref{Phi-def} to $0.5$. One should note that a small $\kappa$ eliminates the impact of the space outside of the field of view on the policy optimization, while selecting a large $\kappa$ leads to neglecting the influence of the region inside the field of view. Both extreme cases result in vanishing gradients during the optimal control computation. Furthermore, since we seek for an open-loop policy, the planning horizon $K$ can be chosen arbitrarily large. However, a longer planning horizon tends to require more iterations for iCR in order to ensue an informative trajectory. Therefore, due to computational power constraints, we choose $K = 5$, which based on our experiments can reach to an informative trajectory in less than $10$ iterations. In each planning phase, we start by initializing a trajectory with constant linear velocity $v_k = 1.5$, and random angular velocity $w_k \sim U[-\pi/10, \pi/10]$ for $k = 0, \ldots, K - 1$ and use iCR at each planning in order to optimize the trajectory.

\begin{figure}[t]
\vspace{1.5mm}
  \centering
  \begin{subfigure}[t]{\linewidth}
  \centering
  \includegraphics[width=0.30\linewidth]{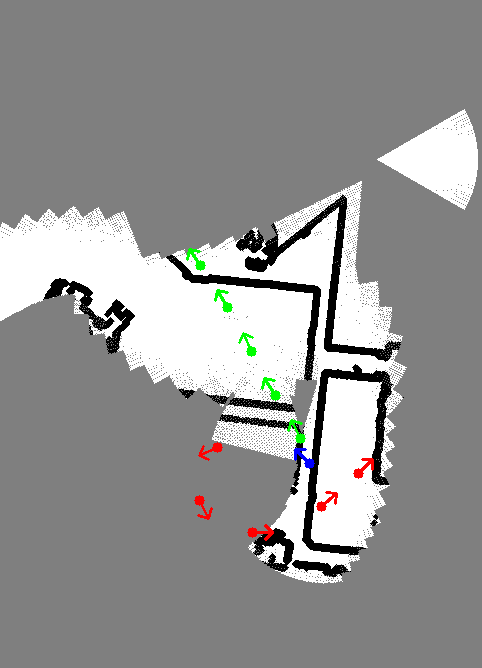}%
  \hfill%
  \includegraphics[width=0.30\linewidth]{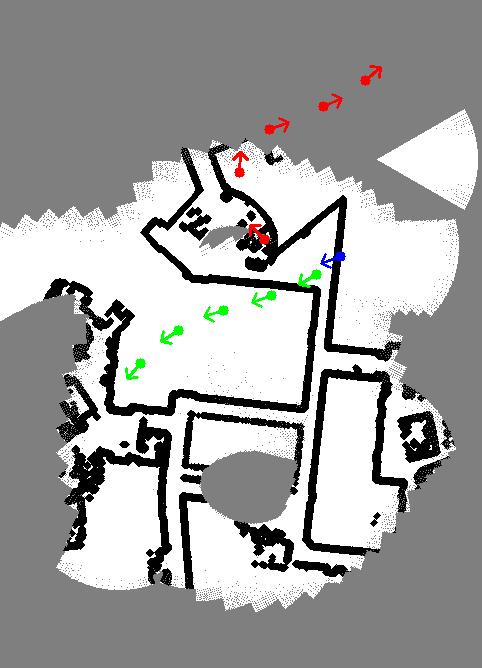}%
  \hfill%
  \includegraphics[width=0.30\linewidth]{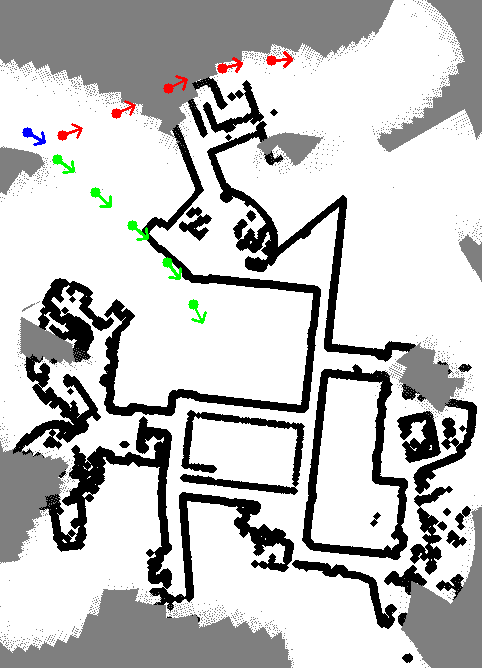}
  \end{subfigure}\\
  \begin{subfigure}[t]{\linewidth}
  \end{subfigure}\\
  \begin{subfigure}[t]{\linewidth}
  \centering
  \includegraphics[width=0.30\linewidth]{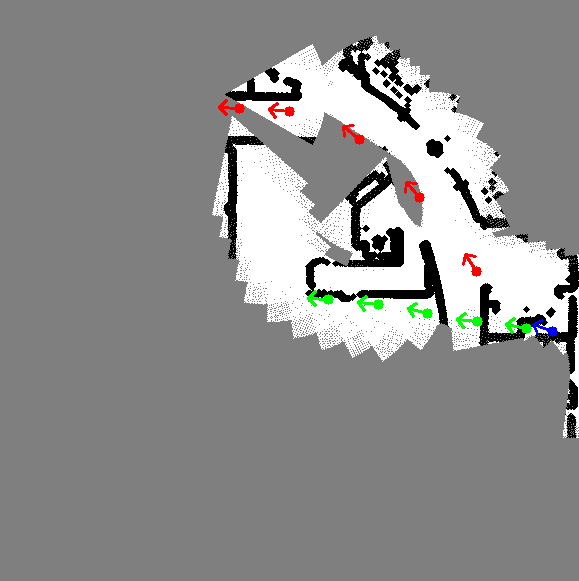}%
  \hfill%
  \includegraphics[width=0.30\linewidth]{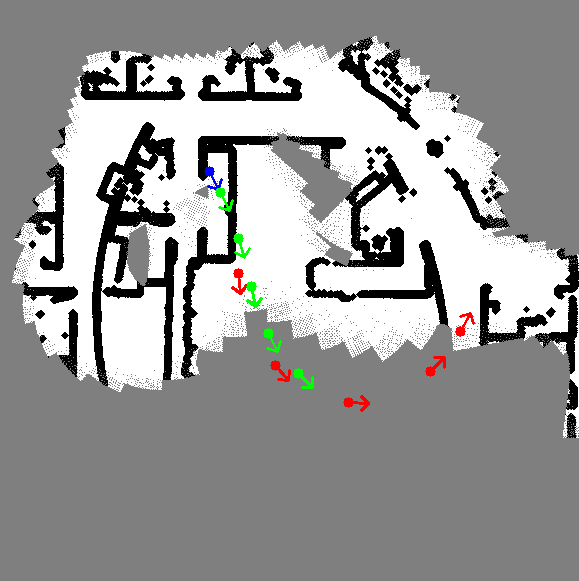}%
  \hfill%
  \includegraphics[width=0.30\linewidth]{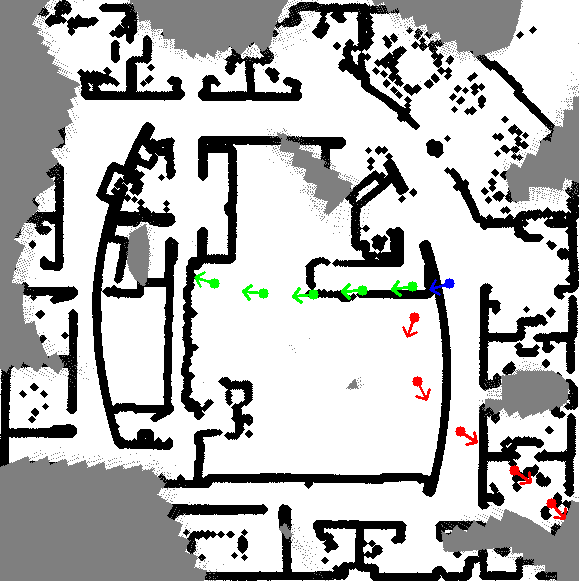}
  \end{subfigure}
  \caption{Snapshots of the robot's exploration and mapping using iCR in two environments, Environment A (top) and Environment B (bottom), at times $k = 51$ (left), $k = 126$ (center), and $k = 476$ (right), respectively. The occupancy maps are obtained by applying a binary threshold to the mean of the EKF. Green and red paths represent the initial trajectory and the optimal trajectory from iCR, respectively. The map data is available from C. Stachniss's lab \cite{freiburg}.}
\label{fig:sim_iCR}
\end{figure}

Fig.~\ref{fig:sim_iCR} shows snapshots of the robot's exploration and mapping via iCR for two environments. Environment A has dimensions $14.46 \text{[m]} \times 20.04 \text{[m]}$. Environment B has dimensions $17.37 \text{[m]} \times 17.43 \text{[m]}$. The robot tries to explore the center of both environments initially, where there is an abundance of unobserved map cells. Then, approximately after $k = 300$, the optimal trajectories move towards the edges of the environment, in which there remain some patches of unexplored regions. It is worth mentioning that iCR does not explicitly plan to visit unexplored regions; the observed behavior is only a result of minimizing the uncertainty, which leads to either refining the visited regions or discovering unexplored areas of the map. This is also evident by examining the information matrix of the EKF during the exploration episodes.

\begin{figure}[t]
\vspace{1.5mm}
  \centering
  \begin{subfigure}[t]{0.48\linewidth}
  \centering
  \includegraphics[width=0.48\linewidth]{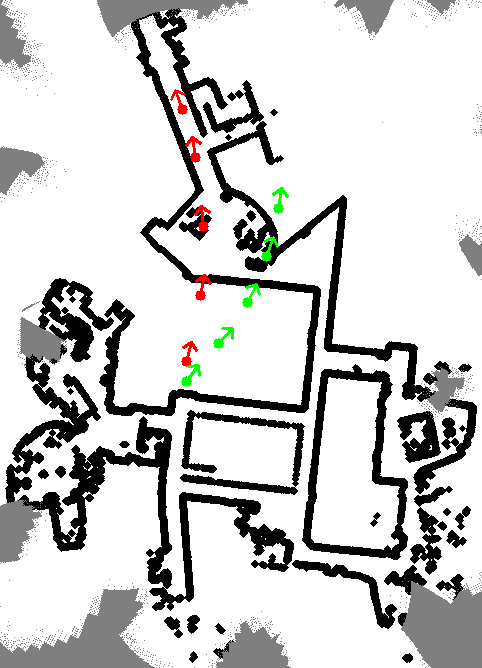}
  \hfill%
  \includegraphics[width=0.48\linewidth]{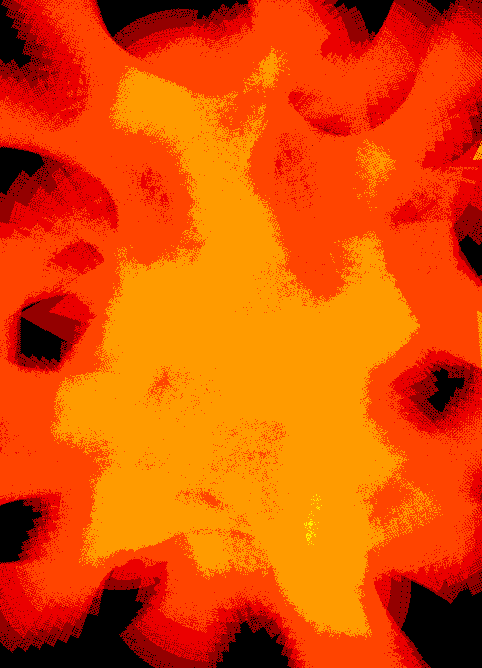}
  \caption{iCR}
  \end{subfigure} \hfill%
  \begin{subfigure}[t]{0.48\linewidth}
  \centering
  \includegraphics[width=0.48\linewidth]{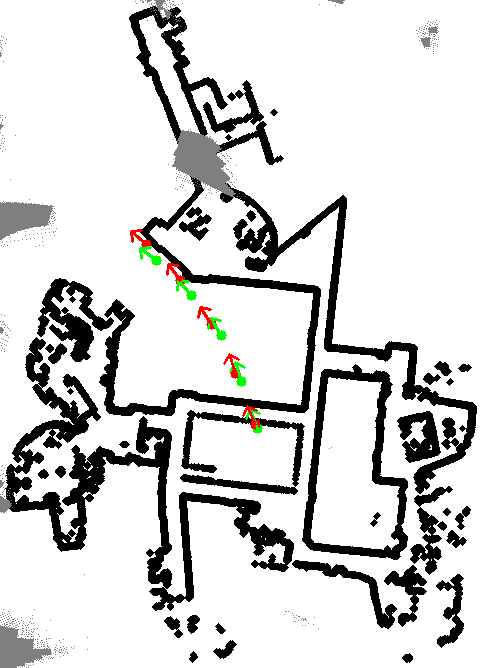}
  \hfill%
  \includegraphics[width=0.48\linewidth]{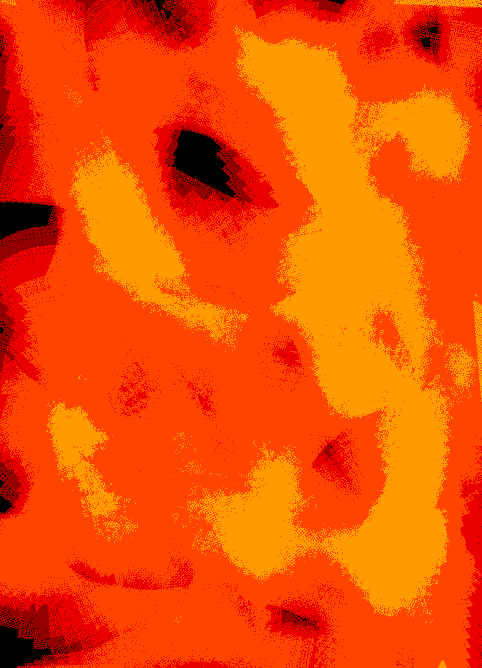}
  \caption{iCR + frontier}
  \end{subfigure}\\
  \begin{subfigure}[t]{0.48\linewidth}
  \centering
  \includegraphics[width=0.48\linewidth]{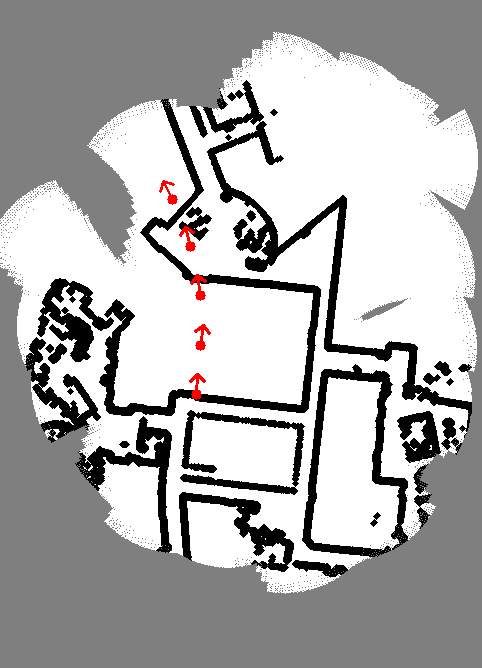}
  \hfill%
  \includegraphics[width=0.48\linewidth]{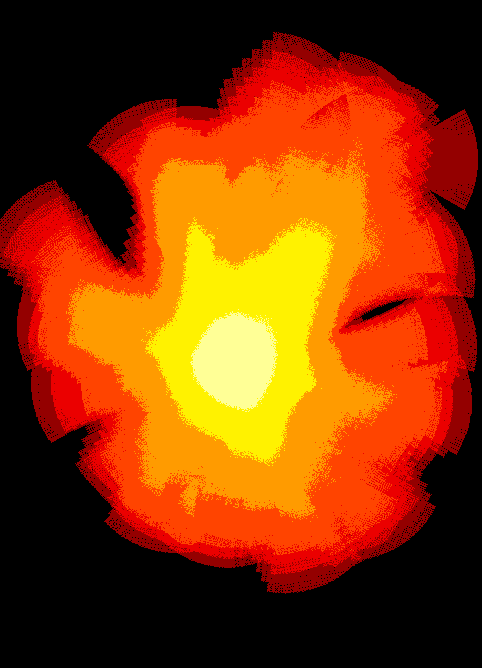}
  \caption{frontier}
  \end{subfigure} \hfill%
  \begin{subfigure}[t]{0.48\linewidth}
  \centering
  \includegraphics[width=0.48\linewidth]{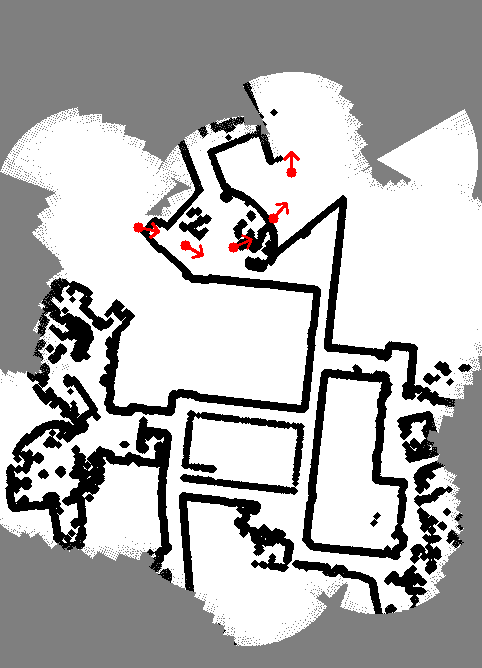}
  \hfill%
  \includegraphics[width=0.48\linewidth]{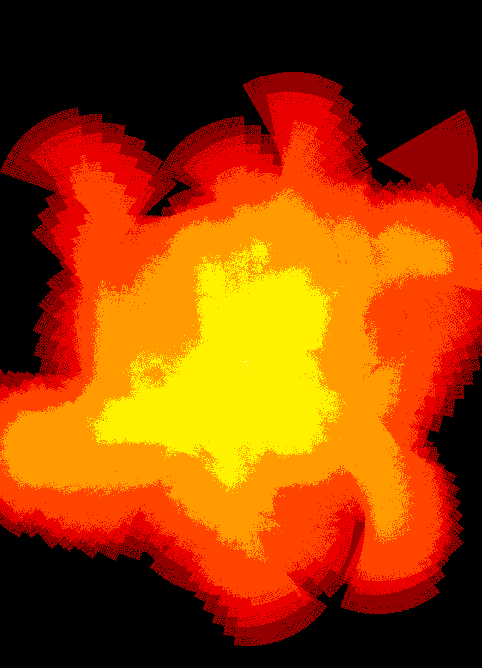}
  \caption{random}
  \end{subfigure}%
  \caption{Final map of Environment A at $k = 925$ for four different exploration strategies (a)--(d). For each pair in (a)--(d), the left image shows the occupancy map obtained by applying a threshold function to the mean of the EKF-based mapping, and the right image depicts the information heat map, obtained from the inverse covariance of the EKF.}
\label{fig:sim_map_6}
\end{figure}

\begin{figure}[t]
  \centering
  \begin{subfigure}[t]{0.48\linewidth}
  \centering
  \includegraphics[width=0.48\linewidth]{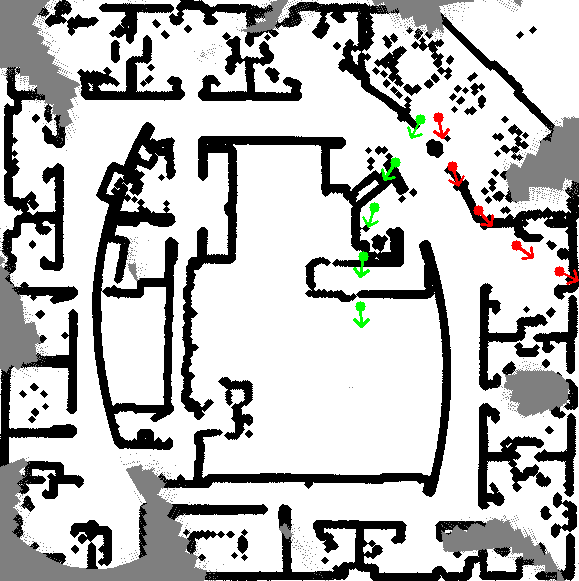}
  \hfill%
  \includegraphics[width=0.48\linewidth]{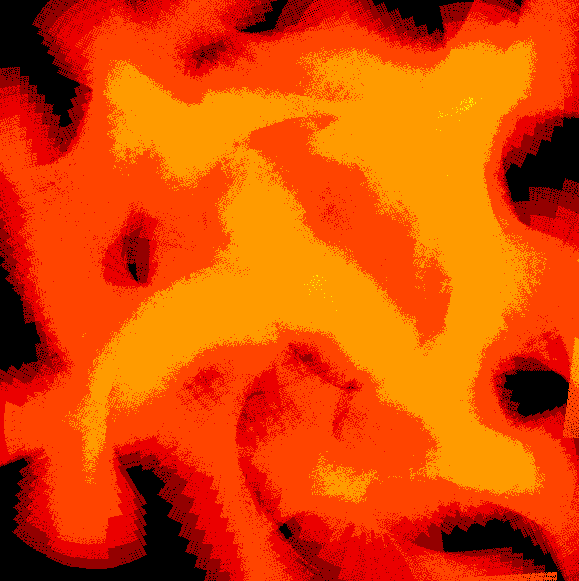}
  \caption{iCR}
  \end{subfigure} \hfill%
  \begin{subfigure}[t]{0.48\linewidth}
  \centering
  \includegraphics[width=0.48\linewidth]{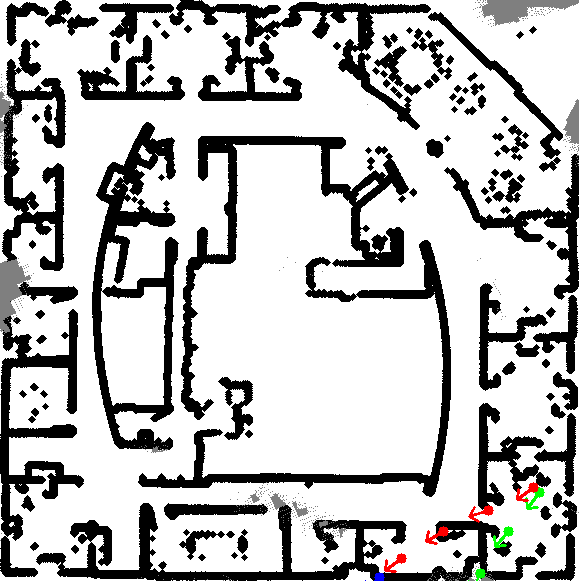}
  \hfill%
  \includegraphics[width=0.48\linewidth]{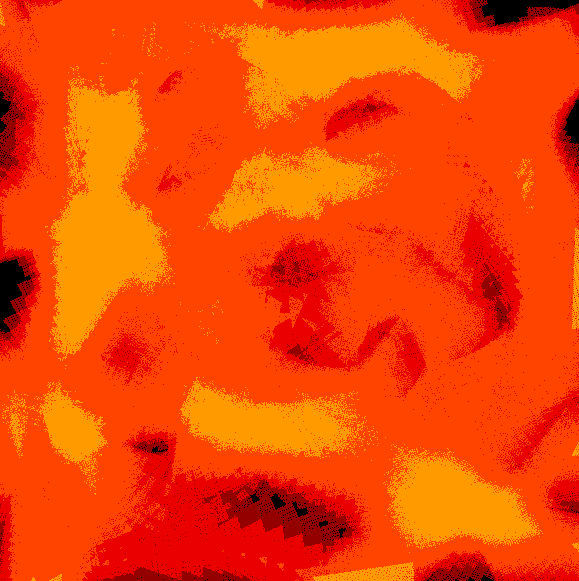}
  \caption{iCR + frontier}
  \end{subfigure}\\
  \begin{subfigure}[t]{0.48\linewidth}
  \centering
  \includegraphics[width=0.48\linewidth]{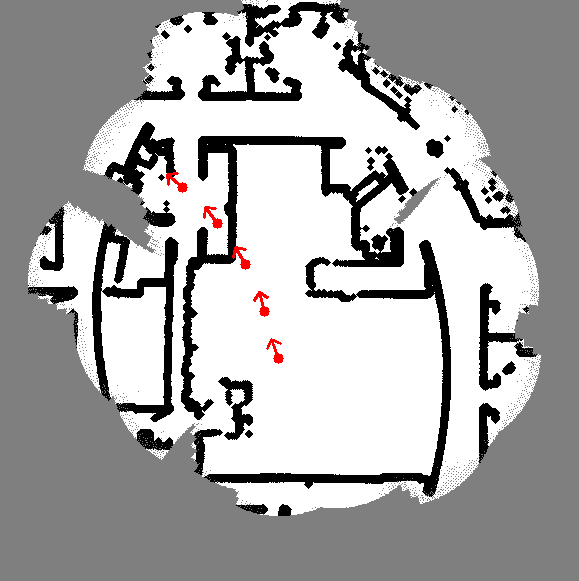}
  \hfill%
  \includegraphics[width=0.48\linewidth]{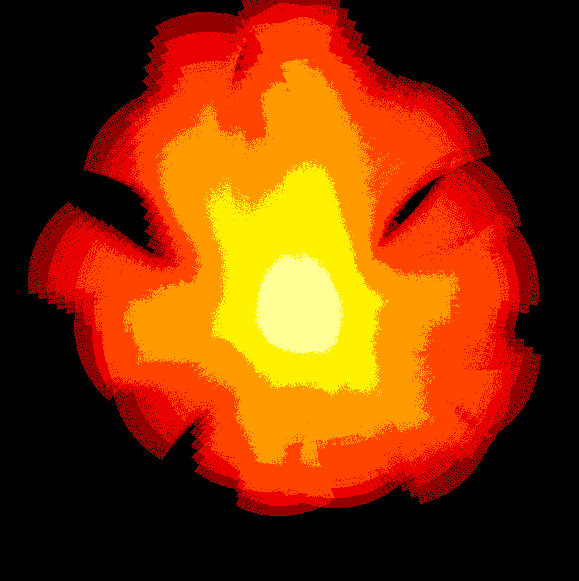}
  \caption{frontier}
  \end{subfigure} \hfill%
  \begin{subfigure}[t]{0.48\linewidth}
  \centering
  \includegraphics[width=0.48\linewidth]{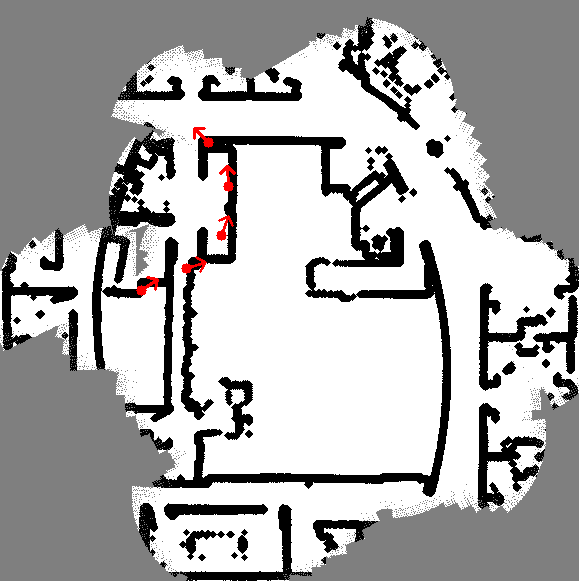}
  \hfill%
  \includegraphics[width=0.48\linewidth]{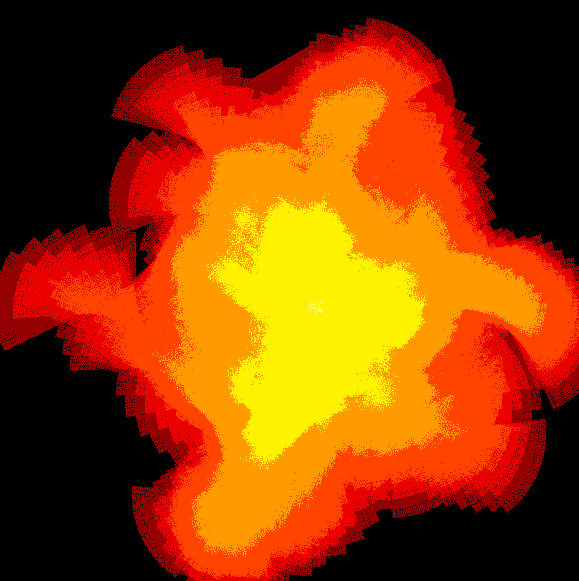}
  \caption{random}
  \end{subfigure}%
  \caption{Analogous demonstrations to Fig.~\ref{fig:sim_map_6} conducted for Environment B. }
\label{fig:sim_map_7}
\end{figure}

Fig.~\ref{fig:sim_map_6} illustrates the final occupancy map of Environment A at $k = 925$ and a heat map showing the diagonal elements in the information matrix obtained via four different exploration strategies: (a) iCR, (b) iCR + frontier, (c) frontier, and (d) random. Fig.~\ref{fig:sim_map_7} shows analogous plots performed in Environment B. As shown in Fig.~\ref{fig:sim_map_6} (a) and \ref{fig:sim_map_7} (a), via iCR, the center of the information map becomes brighter than the edges, meaning that the cells closer to the center receive more robot observations. Throughout multiple times of planning, we observe that the initial random path (green) brings insignificant uncertainty reduction since the trajectory overlaps the observed region highly. On the other hand, the optimal path from iCR (red) tends to visit unexplored areas, by which the information quantity from EKF becomes larger. This shows the advantage of iCR-based path planning which finds the informative trajectory over the continuous control space even by starting from a random control sequence without using any high-level heuristics, such as biasing the trajectory to visit frontiers as in \cite{yamauchi1998frontier, asgharivaskasi2021active, charrow2015information}. Furthermore, our simulations show that adding heuristics to the trajectory initialization in iCR can lead to more efficient exploration. For the demonstration, we set the initial trajectory during each planning step to face the boundary between observed map cells and the largest unexplored region. In this case, iCR can be considered as an augmentation over the path from frontier-based exploration \cite{yamauchi1998frontier}. As illustrated in both Fig.~\ref{fig:sim_map_6} (b) and Fig.~\ref{fig:sim_map_7} (b), the exploration via iCR with frontier receives relatively more observations in the edge map cells, compared to the exploration via iCR with random initialization does as shown in Fig.~\ref{fig:sim_map_6} (a) and Fig.~\ref{fig:sim_map_7} (a), by which we can deduce that providing a frontier location to the initial trajectory can enhance the map certainty by enabling more homogeneous exploration.

We test two baseline exploration strategies to compare with iCR. The first baseline is the same as frontier-based exploration \cite{yamauchi1998frontier}, where we plan a trajectory facing the frontier between the explored area and the largest unexplored region. As shown in Fig.~\ref{fig:sim_map_6} (c) and Fig.~\ref{fig:sim_map_7} (c), frontier-based exploration leaves most of edge pixels unexplored. By investigating through frontier-based exploration episodes, we observe that the robot shortly starts to oscillate between edges of the map since the largest unexplored patch perpetually changes among different corners of the simulation environment, and hence the robot fails to utilize time steps in order to efficiently explore the environment. The second baseline is a random policy under constant linear velocity $v_k = 1.5$ and random angular velocity $w_k \sim U[-\pi/3, \pi/3]$. Similar to the frontier-based policy, the resulting final map under the random policy leaves a large part of the environment unexplored, which is expected since the random policy does not aim to minimize the uncertainty of the map $\bfm$.

\begin{figure}[t]
  \centering
  \includegraphics[width=0.5\linewidth]{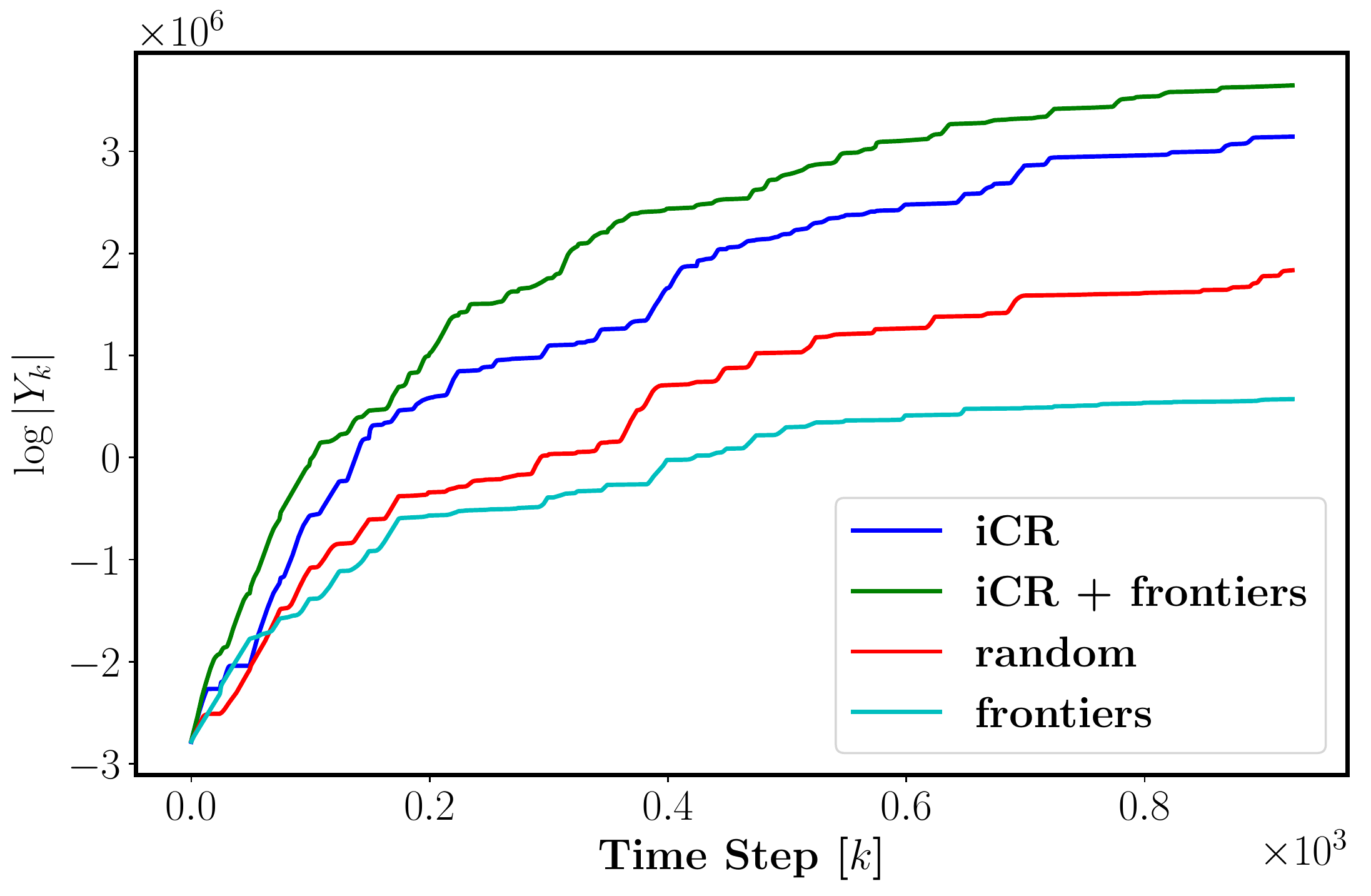}%
  \hfill%
  \includegraphics[width=0.5\linewidth]{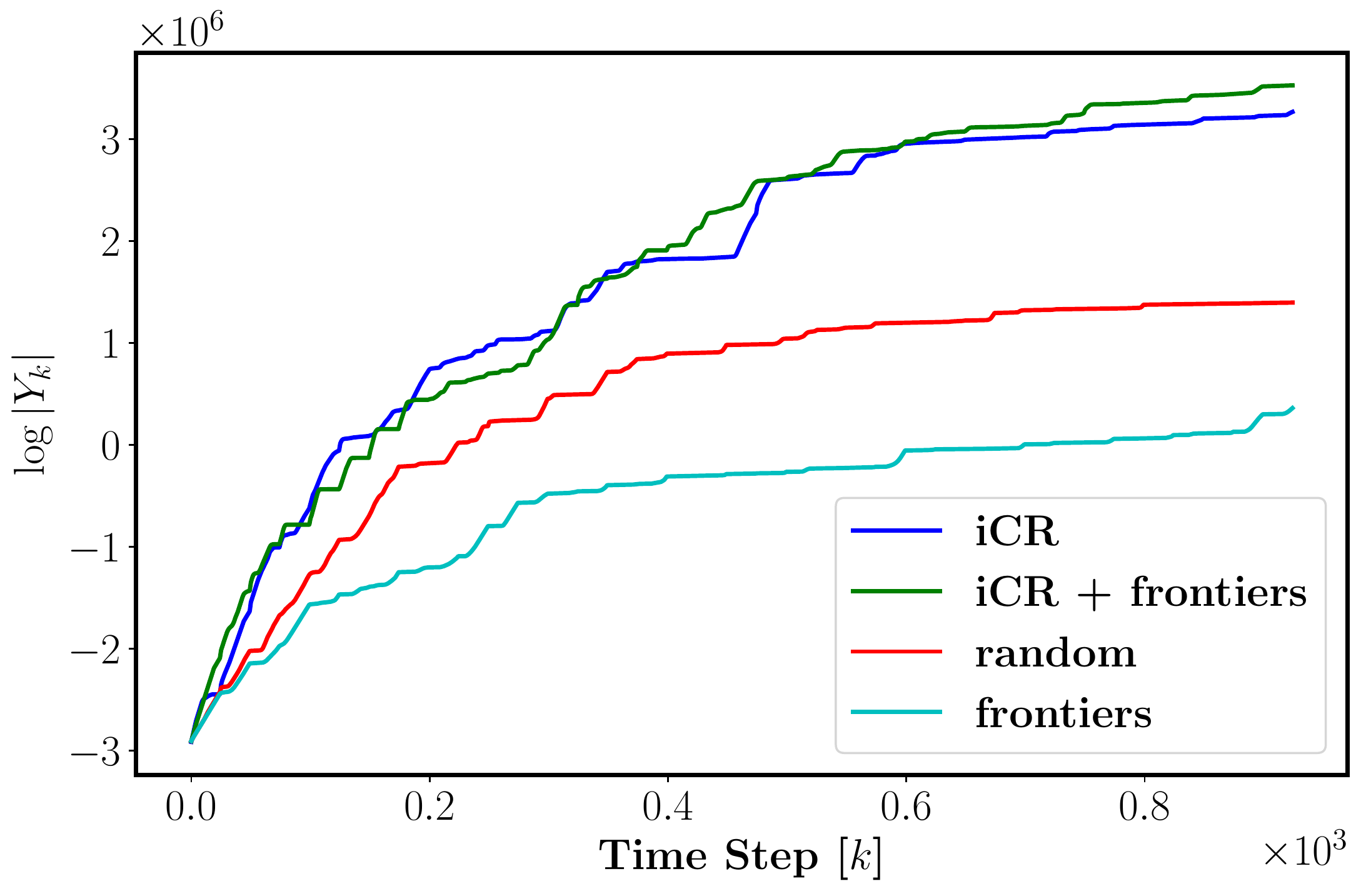}
  \caption{Evolution of the log-determinant of the information matrix over time steps. The left and right plots correspond to the exploration episodes of Fig.~\ref{fig:sim_map_6} and Fig.~\ref{fig:sim_map_7}, respectively.}
\label{fig:sim_perf}
\end{figure}

Fig.~\ref{fig:sim_perf} shows the reward function $\log |Y_{k}|$, the certainty in the map $\bfm$, over time for the experiments shown in Fig.~\ref{fig:sim_map_6} (left) and ~\ref{fig:sim_map_7} (right). In both plots in Fig.~\ref{fig:sim_perf}, we observe that both iCR (blue) and iCR + frontier (green) have a larger increase of the information than the information acquired by frontiers (cyan) and random policy (red) for all time. As mentioned earlier, the robot tends to start exploration by visiting information-rich regions, such as the center of the map. This results in a steep climb in the reward function during the initial phase of exploration ($k \leq 300$). As the robot makes more observations, the slope of the reward curve decreases, which corresponds to the refining phase where there are only a few unexplored map cells and the rest of the map has been discovered with high certainty. 

\section{Conclusion} 
This paper developed \emph{iterative Covariance Regulation} (iCR), a new forward-backward gradient descent algorithm for active exploration and mapping over continuous $SE(3)$ trajectories. Active mapping was posed as a finite-horizon deterministic optimal control problem, aiming to maximize the information matrix of the map at the terminal time conditioned on the potential measurement data from the on-board sensors. Our approach utilizes a differentiable field of view for the robot sensing model and developed a trajectory optimization approach by computing the gradient of the terminal reward function with respect to the multi-step control sequence over the $SE(3)$ manifold. The proposed method was demonstrated in the context of active occupancy grid mapping in numerical experiments. The paper considered occlusion-free and obstacle-free planning, which is clearly a limitation for real-world deployment. Future work will focus on accounting for occlusion in the differntiable field of view formulation and for obstacles in the iCR trajectory optimization to achieve safe active exploration and mapping.


\appendix[Signed Distance Function of a 2-D Cone]

Suppose that the field of view is a cone which has a vertex at the origin, the circular base at $x = h$, and the angles $[- \psi, \psi]$ from the vertex at $x-y$ plane. Then, the field ${\mathcal F}_{\rm c} \subset \R^3$ is given by 
 \begin{align*}
     {\mathcal F}_{\rm c} = \{(x,y,z) \in \Omega | y^2 + z^2 \leq (\tan(\psi) x)^2, \hspace{1mm} x \in [0, h] \}. 
 \end{align*}
Let ${\mathcal F}_{\rm c, z=\zeta}$ be the cone region projected onto 2-D space of $z = \zeta$ for a value $\zeta \in \R$. Then, the Signed Distance Function for ${\mathcal F}_{\rm c, z=0}$ is given in the following lemma. 
\begin{lemma}
 Signed Distance Function $d: \R^2 \times {\mathcal F}_{\rm c, z=0} \to \R$ of the cone projected onto 2-D space $z=0$ is given by 
 \begin{align}
     d(\bfq, {\mathcal F}_{\rm c, z=0}) = \begin{cases}
    \frac{ \bfa_i^\top \bfq + b_i}{|| \bfa_i||}, \quad \textrm{if} \quad  \bfq \in {\mathcal D}_i, \\
    || \bfq - \bfq_i ||, \quad \textrm{if}  \quad \bfq \in {\mathcal P}_i. 
     \end{cases} \label{app:SDF-cone} 
 \end{align}
 where 
 \begin{align*}
    p^*_{x} =& \fr{h}{  1 + \sin(\psi)}, \\
     {\mathcal D}_1 = & \{(x,y) \in \Omega | y \in (\underline{l}_1(x), \bar{l}_1(x)), \forall x \in (-\infty, h] \} , \\
     {\mathcal D}_2 = & \{(x,y) \in \Omega | y \in (-\bar{l}_1(x), - \underline{l}_1(x)), \forall x \in (-\infty, h] \} , \\
     {\mathcal D}_3 = & \{(x,y) \in \Omega | y \in (-\underline{l}_1(x), \underline{l}_1(x)), \forall x \in [p^*_x,  \infty ) \} 
 \end{align*}
 \begin{align*}
     {\mathcal P}_1 = & \{(x,y) \in \Omega |  y \in ( \bar{l}_1(x), +\infty), \forall x \in \R \} , \\
     {\mathcal P}_2 = & \{(x,y) \in \Omega | y \in (- \infty,  -\bar{l}_1(x)), \forall x \in \R \}, \\
     {\mathcal P}_3 = & \{(x,y) \in \Omega | y \in (-\underline{l}_1(x), \underline{l}_1(x)), \forall x \in \R_{\leq 0} \},
 \end{align*}
 \begin{align*}
     \underline{l}_1(x) =& \begin{cases}
     - \frac{1}{\tan(\psi)}x, \quad \forall x \in (-\infty, 0], \\
     0, \quad \forall x \in [0, p^*_x],\\
     \tan(\pi / 4 + \psi / 2) x - \fr{ h}{\cos (\psi)}, \hspace{1mm} \forall x \in [p^*_x, h],\\
     h \tan(\psi), \quad \forall x \in [h, \infty),
     \end{cases} \\
     \bar{l}_1(x) =& \begin{cases}
     - \frac{x - h}{\tan(\psi)} + h \tan(\psi), \quad \forall x \in (-\infty, h],\\
     h \tan(\psi), \quad \forall x \in [h, \infty),
     \end{cases}
 \end{align*}
 \begin{align}
    a_1 =& \left[
    \begin{array}{c}
    -1 \\
    \frac{1}{\tan(\psi)} 
    \end{array}
    \right], 
    a_2 = \left[
    \begin{array}{c}
    -1  \\
    -\frac{1}{\tan(\psi)} 
    \end{array}
    \right], 
    a_3 = \left[
    \begin{array}{c}
    1  \\
    0 
    \end{array}
    \right], \label{app:a-def} \\
    b_1 =& 0, \quad b_2 = 0, \quad  b_3 = -h, \label{app:b-def}\\
    q_1 =& \left[
    \begin{array}{c}
    h \\
    h \tan(\psi) 
    \end{array}
    \right],  
    q_2 = \left[
    \begin{array}{c}
    h  \\
    - h \tan(\psi) 
    \end{array}
    \right], 
    q_3 = \left[
    \begin{array}{c}
    0  \\
    0 
    \end{array}
    \right].\notag
\end{align}
 \end{lemma}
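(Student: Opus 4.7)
The plan is to verify the formula by identifying the underlying Voronoi partition of $\R^2$ with respect to the triangular boundary of the 2-D cone $\calF_{\rm c, z=0}$. First I recognize $\calF_{\rm c, z=0}$ as the triangle with vertices $(0,0)$, $\bfq_1 = (h, h\tan\psi)$, $\bfq_2 = (h, -h\tan\psi)$, whose three edges are the upper leg (from the origin to $\bfq_1$), the lower leg (from the origin to $\bfq_2$), and the base (from $\bfq_2$ to $\bfq_1$). For each edge $i$, I verify that $\bfa_i/\|\bfa_i\|$ is the outward unit normal and that $(\bfa_i^\top \bfq + b_i)/\|\bfa_i\|$ equals the signed distance from $\bfq$ to the supporting line of that edge, with the convention that values are negative inside the triangle and positive outside. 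This reduces to three elementary plane-geometry identities using \eqref{app:a-def}--\eqref{app:b-def}.

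Next I construct the Voronoi partition of $\R^2$ with respect to $\pa \calF_{\rm c, z=0}$, which consists of six cells: three edge cells in which the nearest boundary point lies in the interior of one edge, and three vertex cells in which the nearest boundary point is a single vertex. The cell boundaries are the perpendiculars to each edge at its two endpoints (separating an edge cell from an adjacent vertex cell), together with the three angle bisectors emanating from each vertex (which partition the interior of the triangle among the three edge cells). Due to the symmetry of the cone about the $x$-axis, the bisector from the origin is the $x$-axis itself; and a short calculation using the half-angle identity $\tan(\pi/4 + \psi/2) = (1+\sin\psi)/\cos\psi$ shows that the bisector from $\bfq_1$ is the line $y = \tan(\pi/4+\psi/2)x - h/\cos\psi$, and that it meets the $x$-axis at $p^*_x = h/(1+\sin\psi)$, which is the incenter.

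The third and most technical step is to match the piecewise definitions of $\underline{l}_1$ and $\bar{l}_1$ to this Voronoi partition. The function $\bar{l}_1$ traces the perpendicular to the upper leg at $\bfq_1$ for $x \le h$, then the perpendicular to the base at $\bfq_1$ (the horizontal line $y = h\tan\psi$) for $x \ge h$; together they form the lower boundary of the vertex cell $\calP_1$. The four pieces of $\underline{l}_1$ trace, from left to right, the perpendicular to the upper leg at the origin, the bisector from the origin (the $x$-axis), the bisector from $\bfq_1$, and a horizontal ray at $y = h\tan\psi$. I would verify continuity of these piecewise functions at the breakpoints $x = 0, p^*_x, h$ using the identity above, and invoke the reflection symmetry $(x,y)\mapsto(x,-y)$ to handle $\calD_2$ and $\calP_2$; this confirms that the six regions $\{\calD_i,\calP_i\}_{i=1,2,3}$ coincide exactly with the six Voronoi cells and tile $\R^2$ without overlap.

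Once the partition is in place, the conclusion is immediate: in each edge cell $\calD_i$ the foot of the perpendicular from $\bfq$ to the supporting line of edge $i$ lies within that edge, so the signed distance to the boundary agrees with the signed distance to the line, namely $(\bfa_i^\top \bfq + b_i)/\|\bfa_i\|$; in each vertex cell $\calP_i$, the nearest boundary point is $\bfq_i$, and since the vertex cells lie outside the triangle the signed distance equals $+\|\bfq - \bfq_i\|$. The main obstacle is the bookkeeping in the third step: confirming that the four-piece definition of $\underline{l}_1$ together with the $x$-range restrictions in the definitions of $\calD_i$ and $\calP_i$ assemble into exactly the right Voronoi cells without gaps or overlaps, especially near the three corners where two perpendiculars and an angle bisector meet. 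All other steps reduce to standard plane-geometry identities.
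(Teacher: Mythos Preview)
Your proposal is correct and follows essentially the same route as the paper: partition the plane into regions where the nearest boundary point lies on a given edge (the $\calD_i$) or at a given vertex (the $\calP_i$), invoke the signed point-to-line distance formula with $(\bfa_i,b_i)$ chosen so that the sign is negative inside and positive outside, and observe that the vertex cells lie outside the triangle. The paper's proof stops at asserting that the regions have this closest-point property, whereas you go further and actually verify that the explicit piecewise descriptions of $\underline{l}_1,\bar{l}_1$ reproduce the Voronoi partition (computing the bisector from $\bfq_1$ via the half-angle identity and locating the incenter at $p^*_x$); this additional bookkeeping is the only substantive difference, and it strengthens rather than deviates from the paper's argument.
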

 
 \begin{proof}
   The separated domains ${\mathcal D}_i$ and ${\mathcal P}_{i}$ for $i \in \{1, 2, 3\}$ are defined so that if $\bfq \in {\mathcal D}_i$ then the closest point in $\pa {\mathcal F}$ lies in a line $\{\bfp \in \R^2 | \bfa_{i}^\top \bfp + b_i = 0\}$, and if $\bfq \in {\mathcal P}_i$ then the closest point in $\pa {\mathcal F}$ is a point $\bfq_i$ which is an edge of the projected cone. The normed distance $d^+: \R^2 \to \R_+$ between a point $\bfq$ and a line ${\mathcal L} := \{\bfp \in \R^2 | \bfa^\top \bfp + b = 0\}$ is known as 
   \begin{align*}
       d^+(\bfq, {\mathcal L}) = \fr{ | \bfa^\top \bfq + b |}{|| \bfa ||}. 
   \end{align*}
   There are two choices for a pair of $(\bfa, b)$ for any given line, in which the other pair has opposite sign to one. By setting the pairs $(\bfa_i, b_i)$ for $i \in \{1,2,3\}$ as \eqref{app:a-def}--\eqref{app:b-def}, one can deduce that if $\bfq \in {\mathcal F}_{\rm c}$ then $\bfa_i^\top \bfq + b_i \leq 0$ and if $\bfq \notin {\mathcal F}_{\rm c}$ then $\bfa_i^\top \bfq + b_i > 0$, which leads to the first line in \eqref{app:SDF-cone}. Indeed, all the other domains ${\mathcal P}_i$ is outside the cone ${\mathcal F}_{\rm c}$, thus if $\bfq \in {\mathcal P}_{i}$ then the SDF has a positive distance between the two points $\bfq$ and $\bfq_i$, which yields the second line of \eqref{app:SDF-cone}.  
 \end{proof}



\bibliographystyle{IEEEtran}
\bibliography{BIB_IROS21.bib}

\begin{thebibliography}{10}
\providecommand{\url}[1]{#1}
\csname url@rmstyle\endcsname
\providecommand{\newblock}{\relax}
\providecommand{\bibinfo}[2]{#2}
\providecommand\BIBentrySTDinterwordspacing{\spaceskip=0pt\relax}
\providecommand\BIBentryALTinterwordstretchfactor{4}
\providecommand\BIBentryALTinterwordspacing{\spaceskip=\fontdimen2\font plus
\BIBentryALTinterwordstretchfactor\fontdimen3\font minus
  \fontdimen4\font\relax}
\providecommand\BIBforeignlanguage[2]{{%
\expandafter\ifx\csname l@#1\endcsname\relax
\typeout{** WARNING: IEEEtran.bst: No hyphenation pattern has been}%
\typeout{** loaded for the language `#1'. Using the pattern for}%
\typeout{** the default language instead.}%
\else
\language=\csname l@#1\endcsname
\fi
#2}}

\bibitem{cadena2016past}
C.~Cadena, L.~Carlone, H.~Carrillo, Y.~Latif, D.~Scaramuzza, J.~Neira, I.~Reid,
  and J.~J. Leonard, ``Past, present, and future of simultaneous localization
  and mapping: Toward the robust-perception age,'' \emph{IEEE Transactions on
  Robotics}, vol.~32, no.~6, pp. 1309--1332, 2016.

\bibitem{delmerico2017active}
J.~Delmerico, E.~Mueggler, J.~Nitsch, and D.~Scaramuzza, ``Active autonomous
  aerial exploration for ground robot path planning,'' \emph{IEEE Robotics and
  Automation Letters}, vol.~2, no.~2, pp. 664--671, 2017.

\bibitem{charrow2015information}
B.~Charrow, S.~Liu, V.~Kumar, and N.~Michael, ``Information-theoretic mapping
  using cauchy-schwarz quadratic mutual information,'' in \emph{IEEE
  International Conference on Robotics and Automation (ICRA)}, 2015, pp.
  4791--4798.

\bibitem{zhang2020fsmi}
Z.~Zhang, T.~Henderson, S.~Karaman, and V.~Sze, ``Fsmi: Fast computation of
  shannon mutual information for information-theoretic mapping,'' \emph{The
  International Journal of Robotics Research}, vol.~39, no.~9, pp. 1155--1177,
  2020.

\bibitem{hollinger2014sampling}
G.~A. Hollinger and G.~S. Sukhatme, ``Sampling-based robotic information
  gathering algorithms,'' \emph{The International Journal of Robotics
  Research}, vol.~33, no.~9, pp. 1271--1287, 2014.

\bibitem{dalal2018safe}
G.~Dalal, K.~Dvijotham, M.~Vecerik, T.~Hester, C.~Paduraru, and Y.~Tassa,
  ``Safe exploration in continuous action spaces,'' \emph{arXiv preprint
  arXiv:1801.08757}, 2018.

\bibitem{zhou2011multirobot}
K.~Zhou and S.~I. Roumeliotis, ``Multirobot active target tracking with
  combinations of relative observations,'' \emph{IEEE Transactions on
  Robotics}, vol.~27, no.~4, pp. 678--695, 2011.

\bibitem{indelman2015planning}
V.~Indelman, L.~Carlone, and F.~Dellaert, ``Planning in the continuous domain:
  A generalized belief space approach for autonomous navigation in unknown
  environments,'' \emph{The International Journal of Robotics Research},
  vol.~34, no.~7, pp. 849--882, 2015.

\bibitem{grocholsky2002information}
B.~Grocholsky, ``Information-theoretic control of multiple sensor platforms,''
  \emph{Ph.D. dissertation, Univ. of Sydney, 2002 [Online]. Available:
  http://www.acfr.usyd.edu.au}, 2002.

\bibitem{yamauchi1998frontier}
B.~Yamauchi, ``A frontier-based approach for autonomous exploration,'' in
  \emph{IEEE International Symposium on Computational Intelligence in Robotics
  and Automation}, 1997, pp. 146--151.

\bibitem{burgard2005coordinated}
W.~Burgard, M.~Moors, C.~Stachniss, and F.~E. Schneider, ``Coordinated
  multi-robot exploration,'' \emph{IEEE Transactions on robotics}, vol.~21,
  no.~3, pp. 376--386, 2005.

\bibitem{holz2010evaluating}
D.~Holz, N.~Basilico, F.~Amigoni, and S.~Behnke, ``Evaluating the efficiency of
  frontier-based exploration strategies,'' in \emph{International Symposium on
  Robotics}, 2010, pp. 1--8.

\bibitem{oleynikova2016continuous}
H.~Oleynikova, M.~Burri, Z.~Taylor, J.~Nieto, R.~Siegwart, and E.~Galceran,
  ``Continuous-time trajectory optimization for online {UAV} replanning,'' in
  \emph{IEEE/RSJ International Conference on Intelligent Robots and Systems
  (IROS)}, 2016, pp. 5332--5339.

\bibitem{thrun2000probabilistic}
S.~Thrun, ``Probabilistic algorithms in robotics,'' \emph{Ai Magazine},
  vol.~21, no.~4, pp. 93--93, 2000.

\bibitem{elfes1995robot}
A.~Elfes, ``Robot navigation: Integrating perception, environmental constraints
  and task execution within a probabilistic framework,'' in \emph{International
  Workshop on Reasoning with Uncertainty in Robotics}.\hskip 1em plus 0.5em
  minus 0.4em\relax Springer, 1995, pp. 91--130.

\bibitem{moorehead2001autonomous}
S.~J. Moorehead, R.~Simmons, and W.~L. Whittaker, ``Autonomous exploration
  using multiple sources of information,'' in \emph{IEEE International
  Conference on Robotics and Automation (ICRA)}, vol.~3, 2001, pp. 3098--3103.

\bibitem{sommerlade2008information}
E.~Sommerlade and I.~Reid, ``Information-theoretic active scene exploration,''
  in \emph{IEEE Conference on Computer Vision and Pattern Recognition}, 2008,
  pp. 1--7.

\bibitem{carlone2014active}
L.~Carlone, J.~Du, M.~K. Ng, B.~Bona, and M.~Indri, ``Active {SLAM} and
  exploration with particle filters using {K}ullback-{L}eibler divergence,''
  \emph{Journal of Intelligent \& Robotic Systems}, vol.~75, no.~2, pp.
  291--311, 2014.

\bibitem{atanasov2015decentralized}
N.~Atanasov, J.~Le~Ny, K.~Daniilidis, and G.~J. Pappas, ``Decentralized active
  information acquisition: Theory and application to multi-robot {SLAM},'' in
  \emph{IEEE International Conference on Robotics and Automation (ICRA)}, 2015,
  pp. 4775--4782.

\bibitem{saulnier2020information}
K.~Saulnier, N.~Atanasov, G.~J. Pappas, and V.~Kumar, ``Information theoretic
  active exploration in signed distance fields,'' in \emph{IEEE International
  Conference on Robotics and Automation (ICRA)}, 2020, pp. 4080--4085.

\bibitem{asgharivaskasi2021active}
A.~Asgharivaskasi and N.~Atanasov, ``Active {B}ayesian multi-class mapping from
  range and semantic segmentation observation,'' \emph{arXiv preprint
  arXiv:2101.01831}, 2021.

\bibitem{Kantaros_InformationGathering_RSS19}
Y.~Kantaros, B.~Schlotfeldt, N.~Atanasov, and G.~J. Pappas, ``Asymptotically
  optimal planning for non-myopic multi-robot information gathering,'' in
  \emph{Robotics: Science and Systems (RSS)}, 2019.

\bibitem{bellman1952theory}
R.~Bellman, ``On the theory of dynamic programming,'' \emph{National Academy of
  Sciences of the United States of America}, vol.~38, no.~8, p. 716, 1952.

\bibitem{pontryagin2018mathematical}
L.~S. Pontryagin, V.~G. Boltyanskii, R.~V. Gamkrelidze, and E.~F. Mishechenko,
  \emph{Mathematical theory of optimal processes}.\hskip 1em plus 0.5em minus
  0.4em\relax John Wiley \& Sons, 1963.

\bibitem{kelley1960gradient}
H.~J. Kelley, ``Gradient theory of optimal flight paths,'' \emph{American
  Rocket Society Journal}, vol.~30, no.~10, pp. 947--954, 1960.

\bibitem{bryson1961gradient}
A.~E. Bryson, ``A gradient method for optimizing multi-stage allocation
  processes,'' in \emph{Harvard Univ. Symposium on digital computers and their
  applications}, vol.~72, 1961, p.~22.

\bibitem{jacobson1970differential}
D.~H. Jacobson and D.~Q. Mayne, \emph{Differential dynamic programming}.\hskip
  1em plus 0.5em minus 0.4em\relax North-Holland, 1970.

\bibitem{liao1991convergence}
L.-Z. Liao and C.~A. Shoemaker, ``Convergence in unconstrained discrete-time
  differential dynamic programming,'' \emph{IEEE Transactions on Automatic
  Control}, vol.~36, no.~6, pp. 692--706, 1991.

\bibitem{li2004iterative}
W.~Li and E.~Todorov, ``Iterative linear quadratic regulator design for
  nonlinear biological movement systems.'' in \emph{International Conference on
  Informatics in Control, Automation and Robotics}, 2004, pp. 222--229.

\bibitem{morbidi2012active}
F.~Morbidi and G.~L. Mariottini, ``Active target tracking and cooperative
  localization for teams of aerial vehicles,'' \emph{IEEE Transactions on
  Control Systems Technology}, vol.~21, no.~5, pp. 1694--1707, 2012.

\bibitem{wei2014optimized}
H.~Wei, W.~Lu, P.~Zhu, G.~Huang, J.~Leonard, and S.~Ferrari, ``Optimized
  visibility motion planning for target tracking and localization,'' in
  \emph{IEEE/RSJ International Conference on Intelligent Robots and Systems
  (IROS)}, 2014, pp. 76--82.

\bibitem{cristofalo2019vision}
E.~Cristofalo, E.~Montijano, and M.~Schwager, ``Vision-based control for fast
  3-d reconstruction with an aerial robot,'' \emph{IEEE Transactions on Control
  Systems Technology}, vol.~28, no.~4, pp. 1189--1202, 2019.

\bibitem{murali2019perception}
V.~Murali, I.~Spasojevic, W.~Guerra, and S.~Karaman, ``Perception-aware
  trajectory generation for aggressive quadrotor flight using differential
  flatness,'' in \emph{American Control Conference (ACC)}, 2019, pp.
  3936--3943.

\bibitem{barfoot2017state}
T.~D. Barfoot, \emph{State estimation for robotics}.\hskip 1em plus 0.5em minus
  0.4em\relax Cambridge University Press, 2017.

\bibitem{hornung2013octomap}
A.~Hornung, K.~M. Wurm, M.~Bennewitz, C.~Stachniss, and W.~Burgard, ``Octomap:
  An efficient probabilistic {3D} mapping framework based on octrees,''
  \emph{Autonomous robots}, vol.~34, no.~3, pp. 189--206, 2013.

\bibitem{oleynikova2017voxblox}
H.~Oleynikova, Z.~Taylor, M.~Fehr, R.~Siegwart, and J.~Nieto, ``Voxblox:
  Incremental {3D} {E}uclidean signed distance fields for on-board {MAV}
  planning,'' in \emph{2017 IEEE/RSJ International Conference on Intelligent
  Robots and Systems (IROS)}, 2017, pp. 1366--1373.

\bibitem{thrun2004simultaneous}
S.~Thrun, Y.~Liu, D.~Koller, A.~Y. Ng, Z.~Ghahramani, and H.~Durrant-Whyte,
  ``Simultaneous localization and mapping with sparse extended information
  filters,'' \emph{The international journal of robotics research}, vol.~23,
  no. 7-8, pp. 693--716, 2004.

\bibitem{Atanasov14ICRA}
N.~Atanasov, J.~{Le Ny}, K.~Daniilidis, and G.~J. Pappas, ``Information
  acquisition with sensing robots: Algorithms and error bounds,'' in \emph{IEEE
  International Conference on Robotics and Automation (ICRA)}, 2014, pp.
  6447--6454.

\bibitem{bishop2006pattern}
C.~M. Bishop, \emph{Pattern recognition and machine learning}.\hskip 1em plus
  0.5em minus 0.4em\relax springer, 2006.

\bibitem{freiburg}
\BIBentryALTinterwordspacing
C.~Stachniss and D.~Haehnel, \emph{StachnissLab}. [Online]. Available:
  \url{https://www.ipb.uni-bonn.de/datasets/}
\BIBentrySTDinterwordspacing

\end{thebibliography}

\end{document}